\documentclass[journal]{IEEEtran}
\usepackage[caption=false,font=normalsize,labelfont=sf,textfont=sf]{subfig}
\usepackage{textcomp}
\usepackage{stfloats}
\usepackage{url}
\usepackage{verbatim}
\usepackage{graphicx}
\usepackage{cite}
\usepackage{xcolor}
\usepackage[unicode=true]{hyperref}
\usepackage{xspace}
\hypersetup{
	colorlinks=false,
	pdfborder={0.3 0.3 0.3},
	citecolor=green,
	linkcolor=blue,
	urlcolor=cyan
}
\usepackage{amsmath,amsfonts}
\usepackage{colortbl}
\usepackage{soul}
\definecolor{mixbg}{RGB}{235,237,245} 
\usepackage{amsmath,amsfonts}
\usepackage{graphicx}
\usepackage{epstopdf}
\usepackage{amsthm}
\usepackage{color}
\usepackage{algorithm}
\usepackage{algorithmic}
\usepackage{array}
\usepackage[normalem]{ulem}
\renewcommand{\algorithmicrequire}{\textbf{Input:}}
\renewcommand{\algorithmicensure}{\textbf{Output:}}
\usepackage{threeparttable}
\usepackage{amssymb}
\usepackage{mathrsfs}
\usepackage{diagbox}
\usepackage{booktabs}
\usepackage{multirow}
\usepackage{xcolor}
\usepackage{bm}
\usepackage{mdframed}
\newtheorem{theorem}{Theorem}
\newtheorem{lemma}{Lemma}
\newtheorem{remark}{Remark}

\newtheorem{assumption}{Assumption}

\newtheorem{definition}{Definition}
\newtheorem{principle}{Principle}

\newcommand{\name}{\texttt{CausShield}\xspace}

\usepackage{soul}

\begin{document}

\title{\name: Sample Reconstruction-Resilient Vertical FL via Causal Representation Learning}

\author{Yongqi~Jiang,
        Yansong~Gao,~\IEEEmembership{Senior Member,~IEEE,}
        Siguang~Chen,~\IEEEmembership{Senior Member,~IEEE,}
\textit{and}~Anmin~Fu,~\IEEEmembership{Member,~IEEE}

\thanks{This work is supported by the Fundamental Research Funds for the Central Universities (B250201047), the 333 High-level Talents Training Project of Jiangsu Province, the Open Research Fund of State Key Lab. for Novel Software Technology, Nanjing University (KFKT2025B03), and the National Natural Science Foundation of China (62372236). \emph{(Corresponding author: Siguang Chen.)}

Y. Jiang is with the School of Computer Science and Engineering, Nanjing University of Science and Technology, Nanjing 210094, China. (e-mail: jiangyongqi@njust.edu.cn)

Y.~Gao is with the University of Western Australia, Perth, Australia. (e-mail: gao.yansong@hotmail.com)

S. Chen is with the College of Computer Science and Software Engineering, Hohai University, Nanjing 211100, China, is also with the State Key Lab. for Novel Software Technology, Nanjing University, Nanjing 210023, China (E-mail: sgchen@hhu.edu.cn).

A.~Fu is with the School of Computer Science and Engineering, Nanjing University of Science and Technology, Nanjing 210094, China. (e-mail: fuam@njust.edu.cn)
}}

\IEEEtitleabstractindextext{
\begin{abstract}
Vertical federated learning (VFL) is a distributed learning paradigm that leverages vertically partitioned features across isolated parties without sharing raw sample features/data. 
It, however, remains vulnerable to active sample reconstruction attacks. Existing defenses against such attacks fail to achieve a satisfactory trade-off between model utility and privacy protection, due to either suppressing task-relevant information alongside privacy-sensitive features or relying on end-to-end supervised training to converge the defense module, which exposes the model to early-epoch vulnerability.

To address this challenge, we adopt a structural causal model (SCM) insight and construct \name. From a task-learning standpoint, causal features within a raw sample are those that are directly relevant and contributory to the learning objective, whereas non-causal features are task-irrelevant but often encode sample-specific private information, thereby facilitating reconstruction. Importantly, to lay a concrete foundation of \name, we theoretically prove this insight. \name thus decomposes the intermediate representations shared between the client and the coordinating server in VFL into task-relevant and task-irrelevant components to ensure full-cycle privacy protection.
Nonetheless, the decomposition is inherently challenging due to the dual objectives of preserving model utility while mitigating privacy leakage. We address this via a carefully formulated optimization problem, which is solved through unsupervised representation learning. We further theoretically prove that transmitting only causal representations can successfully resist sample reconstruction attacks while preserving the convergence behavior of standard VFL. Extensive experiments compare \name against seven state-of-the-art defenses, including InvL (USENIX Security'25), and evaluate robustness against advanced reconstruction attacks such as URVFL (NDSS'25). Results demonstrate that \name consistently outperforms prior defenses in terms of privacy protection, model utility, and computational efficiency.

\end{abstract}
	
\begin{IEEEkeywords}
	Vertical federated learning, Sample reconstruction attack, Privacy, Causality.
\end{IEEEkeywords}

}
\maketitle

\IEEEdisplaynontitleabstractindextext

\IEEEpeerreviewmaketitle

\section{Introduction}\label{sec:introduction}


\IEEEPARstart{V}ertically federated learning (VFL)~\cite{zhu2025passive} has attracted growing interest from academia and industry due to its ability to integrate diverse and complementary (non-overlapping) features, and it has been widely adopted in real-world applications, such as the FATE platform~\cite{Ye2025} and WeBank's federated risk control system~\cite{WeBank}. Meanwhile, sample-level reconstruction attacks~\cite{Erdogan2022,Yao2025} pose a severe privacy threat in VFL, where an honest-but-curious server can exploit the intermediate representations received during the training or inference stage to recover individual private inputs. Specifically, with the knowledge of ground-truth labels, the optimization-enabled attackers recover raw training samples by minimizing the distance between predictions and target representations through input adjustment~\cite{Erdogan2022}; the learning-based attackers train an inverse model to map intermediate representations back to raw inputs directly~\cite{ye2024feature}; the generative adversarial network (GAN)-based attackers employ a generator to synthesize inputs that approximate the distribution of the raw data~\cite{Yao2025}.


\noindent\textbf{Limitation of Existing Defenses.} To counter the threat of sample reconstruction, existing defenses can be broadly categorized into two categories: (1) traditional generic privacy-preserving methods and (2) learning-based methods. The first category includes secure multi-party computation (SMC)~\cite{khan2025secure}, homomorphic encryption (HE)~\cite{Zhu2024,li2025fedphe}, and differential privacy (DP)~\cite{cui2025aldp}---three foundational privacy-preserving techniques widely recognized as effective in both theoretical and practical contexts. However, they suffer significant limitations~\cite{hu2024sok}, drastically reducing utility or introducing additional computational and communication overheads, which consequently prolong the required training time considerably. Our work, therefore, avoids falling under this category.

\par The second category consists of a series of innovative learning-based methods including but not limited to lightweight perturbations~\cite{Gu2023,Xu2025} and representation remolding~\cite{Qiu2024,Singh2021,Li2022c,Noorbakhsh2024,Vepakomma2020,Chen2025Dual,Bi2024,shi2025defending}, such as adversarial representations~\cite{Singh2021}, imitating attacker actions~\cite{Li2022c}, and extending the irrelevance between inputs and their representations~\cite{Noorbakhsh2024,Vepakomma2020,Chen2025Dual}. 
Although these methods can mitigate the risk of data leakage to some extent, they are suffering from limitations \textbf{(Ls)}. 
\textbf{(L1)} Most of them ~\cite{Gu2023,Xu2025, Qiu2024,Singh2021,Li2022c,Noorbakhsh2024,Vepakomma2020} struggle to achieve a satisfactory trade-off between privacy preservation and model utility. The main reason is the lack of data decoupling capabilities; attempts to suppress sensitive information in the representations also indiscriminately suppress task-relevant information, leading to significant degradation in model utility. Among them, only works~\cite{Gu2023,Xu2025,Qiu2024} are designed for VFL.
\textbf{(L2)} We note recent methods~\cite{Bi2024,shi2025defending, Chen2025Dual} strike a balance between privacy-preservation and model utility; however, they cannot be directly extended to VFL, as they rely on label supervision or complete model parameters unavailable to VFL's passive parties (detailed in Section~\ref{sec:related_defense}). 



\par Aligned with~\cite{Luo2026,Xu2025,Yao2025}, by focusing on typical computer vision classification tasks, as the image modality is widely targeted and studied, we ask the following research question:

\begin{mdframed}[backgroundcolor=black!10,rightline=false,leftline=false,topline=false,bottomline=false,roundcorner=2mm]
   How to prevent the participant's raw sample reconstruction while preserving the global model utility of the VFL primary task?
\end{mdframed}

\noindent\textbf{Our Solution.} 
Previous works treat all information within representations equally of a given sample; therefore, they struggle to suppress privacy-related information without harming the task-relevant information, because the suppression is enforced indiscriminately.
In contrast, we propose \name, leveraging the structural causal model (SCM)~\cite{Scholkopf2022, Tople2020} to unsupervisedly decompose representations into causal (i.e., task-relevant) and non-causal (i.e., task-irrelevant) components, retaining only the former for VFL sharing while guaranteeing ultimate discarding of the latter. Generally speaking, SCM is a mathematical framework that uses a directed acyclic graph to explain the causal relationships between variables during the process of data generation. 

\par The core insight of \name lies in the fundamental distinction in how causal/non-causal components within the representation of a given sample contribute to the primary task and sample reconstruction. Intuitively, non-causal components (e.g., background and illumination) typically encapsulate rich sample-specific variations that distinguish individual instances from one another~\cite{Mahajan2021}. These variations contribute little to the primary task, but they provide highly identifiable statistical cues that adversaries can exploit for high-fidelity reconstruction, as we prove in Theorems in Section~\ref{theo_found}. Conversely, causal components capture invariant category-level features (e.g., semantic structures and shapes) that determine the class label~\cite{Mahajan2021}. Since samples of the same class share significant similarity in their task-relevant features, adversaries cannot distinguish specific instances based solely on such components, inherently limiting reconstruction fidelity.

We first lay a theoretical foundation that non-causal components dominate the primary information source for sample reconstruction (Theorem~\ref{prop:singular}), and transmitting only causal representations greatly raises the reconstruction error lower bound, thus fundamentally preventing sample reconstruction (Theorem~\ref{thm:invloss}). At the same time, these theorems indicate that non-causal components have a negligible contribution to the main task, which should therefore be discarded from a sample-level privacy-preservation perspective.

Guided by the theoretical foundation, \name transforms intermediate representations shared in VFL into purified causal representations that exclusively preserve task-relevant components while discarding task-irrelevant ones. Causal component decomposition is performed by the privacy-sensitive VFL participant before each round of representation transmission, ensuring full-cycle protection. 




\par Fulfilling \name is, however, non-trivial, which confronts a significant challenge (\textbf{C1}): how to disentangle and eliminate task-irrelevant information from representations while preserving only task-relevant components? The constraints of VFL further pose another critical challenge (\textbf{C2}). Unlike in centralized, horizontal FL and split FL settings, the passive parties (i.e., the defenders) in VFL have no access to labels or the full model parameters that are mandatory assumptions or knowledge by previous work~\cite{Bi2024}  (detailed in Section~\ref{sec:related_defense}). 



To this end, two properties that the task-relevant causal components in SCM must satisfy~\cite{Scholkopf2022, Tople2020, Lv2022}: (1) the changes on the task-irrelevant non-causal components do not affect the task-relevant ones; (2) the elements in the task-relevant causal representation should be mutually independent and decomposable. Directly solving this is hard. We instead perform causal representation extraction through two core consequential module designs, each of which is formulated as an optimization problem:  \textit{surrogate dataset generation module} (Section~\ref{sec:surrogateDataset}) synthesizes surrogate samples for each raw sample, ensuring significant randomization or variance in task-irrelevant components while assuring consistency in task-relevant components (Addressing \textbf{C1}); \textit{causal representation learning module} (Section~\ref{subsec IV.C}) facilitates mutual learning between raw and surrogate sample pairs in a fully unsupervised manner, requiring neither ground-truth labels nor any task-specific supervision (Addressing \textbf{C2}). 



Our main contributions are summarized as below:

\noindent$\bullet$ We are the first to introduce the privacy benefits of SCM into the VFL settings and, importantly, establish the theoretical foundations that prove transmitting only causal representations can fundamentally resist sample reconstruction attacks without harming model utility.


\noindent$\bullet$ We propose the \name guided by the theoretical foundation, which unsupervisedly remolds the raw representations of private inputs shared in VFL into causal representations that only capture task-relevant components. \name can be seamlessly integrated into standard VFL pipelines and strike the trade-off between privacy and model utility through two innovative module designs: surrogate dataset generation for synthesizing surrogate samples; causal representation learning for task-relevant component extraction.


\noindent$\bullet$ We provide the convergence proof of \name in standard VFL. Additionally, we conduct comprehensive experiments across five benchmark datasets and two mainstream attacks, with comparisons against seven SOTAs, including InvL (USENIX Security'25) and robustness evaluation against URVFL (NDSS'25). Results demonstrate that \name consistently outperforms prior defenses in privacy protection, model utility, and computational efficiency.



\par The rest of this article is organized as follows: Section~\ref{sec:2} outlines the relevant background knowledge and related work. Section~\ref{theo_found} presents \name's theoretical foundations. Section~\ref{sec:4} presents the threat model and details of \name. Subsequently, we provide the convergence proof in Section~\ref{sec:5}. The experimental evaluations are in Section~\ref{sec:6}. Finally, the conclusions are drawn in Section~\ref{sec:7}.

\section{Background and Related Work}
\label{sec:2}

\par This section provides the background of VFL, followed by related sample reconstruction attacks and defenses.

\subsection{Vertical Federated Learning}

\par VFL~\cite{li2023vertical,zhu2025passive} enables multiple participants to train a global model $F$, where each participant holds partial features, while the number of samples is the same among participants. In image-based VFL, there is typically one coordinating server (i.e., active party) and $K$ data owners (i.e., passive parties) that possess the same or similar image data samples yet are divided by feature dimensions. We define $D = \{ {D_k}\} _{k = 1}^K$ as the VFL private datasets and each dataset possesses $N$ training samples. The ${D_k} = \{ {x_{k,i}}\} _{i = 1}^N$ is local dataset that belongs to the $k$-th party. The corresponding labels are held exclusively by the active party, denoted as $\mathcal{Y} = \{ {y_{i}}\} _{i = 1}^N$, where $y_i$ is the label associated with the $i$-th sample across all parties.

\par \textbf{Passive parties:} These participants possess privacy-sensitive datasets. To mitigate the data silo issue, each passive party $k$ learns a local bottom model ${f_k}$ parameterized by ${\theta _k}$, which extracts an intermediate representation $\bm{\mathit{r}}_{k,i}$ from each local raw sample ${x_{k,i}}$. The final layer of ${f_k}$ is called the cut-layer. The intermediate representation $\bm{\mathit{r}}_{k,i}$ is uploaded to the active party. According to the chain rule, the model parameters of passive parties can be updated by backpropagating the cut-layer gradients ${\nabla _{{\theta _{{\rm{cut}}}}}}{\cal L}$ received from the active party.

\par \textbf{Active party:} The active party concatenates all the received representations $\bm{\mathit{r}}_i^s = [\bm{\mathit{ r}}_{1,i}^s,\bm{\mathit{ r}}_{2,i}^s...,\bm{\mathit{r}}_{K,i}^s]$ and then feeds them to the top model ${f_{\rm top}}$ parameterized by ${\theta _{\rm top}}$, which produces final predictions. Then the corresponding gradients are calculated and the top model is updated immediately. Next, the active party sends ${\nabla _{{\theta _{{\rm{cut}}}}}}{\cal L}$ to all passive parties.

The VFL can be formulated as:
\begin{equation}\label{eq1}
	\begin{split}
		\min_\Theta \mathcal{L}(\Theta) &= \frac{1}{N}\sum_{i=1}^N {l_{\rm ce}}(f_{\rm top}(f_1(\theta_1;x_{1,i}),\\
		&f_2(\theta_2;x_{2,i}), \ldots,f_K(\theta_K;x_{K,i}),y_i)),
	\end{split}
\end{equation}
where $\Theta  = \{ {\theta _1},{\theta _2},...,{\theta _K},{\theta _{\rm top}}\} $ represents the set of model parameters customized for parties. ${l_{\rm ce}}$ denotes the loss function i.e., Cross-Entropy (CE).

\subsection{Sample Reconstruction Attacks on Distributed Learning}

\par In the context of sample reconstruction against distributed systems (i.e., VFL and split FL), adversaries primarily exploit two distinct attack surfaces: (i) querying the bottom models of passive parties~\cite{luo2021feature,yang2023practical, he2019model}, and (ii) analyzing intermediate representations uploaded by passive parties during training~\cite{Pasquini2021,Erdogan2022,gao2023pcat,zhu2025passive,Yao2025}.

\par Early studies assumed attackers could query the target client's bottom model. In~\cite{luo2021feature,yang2023practical}, attackers leveraged known partial data and multiple query outputs to train generative models (i.e., GANs) that minimize the prediction discrepancy, thereby inferring the victim party's private data. Specifically,~\cite{luo2021feature} directly accesses model gradients, while~\cite{yang2023practical} employs zeroth-order optimization to estimate gradients without direct access. He \textit{et al.}~\cite{he2019model} find that attackers could reconstruct data with high fidelity using the intermediate representations, via techniques like regularized maximum likelihood estimation or inverse networks. Notably, these works impose strong, often impractical assumptions, requiring both access to real samples and ability to query the bottom model repeatedly.

\par As direct query access to victim models became increasingly restricted, recent works have explored alternative attack surfaces. FSHA~\cite{Pasquini2021} requires no knowledge of the client's model or data, using a shadow dataset and discriminator to guide representation alignment. Unsplit~\cite{Erdogan2022} assumes knowledge of the bottom model structure and observes uploaded representations without auxiliary data. PCAT~\cite{gao2023pcat} accesses auxiliary data but lacks bottom model knowledge, training a shadow model guided by the top model for reconstruction. However, these methods often suffer from limited reconstruction quality, particularly when the cut-layer is deep or the auxiliary data distribution shifts. More recently, SDAR~\cite{zhu2025passive} and URVFL~\cite{Yao2025} have achieved superior reconstruction performance. SDAR leverages adversarial regularization with a dual-discriminator mechanism to jointly train a shadow model and its inverse model. URVFL integrates a discriminator with an auxiliary classifier, using label information to guide the target model to mimic a pretrained encoder's embedding distribution for enhanced reconstruction.

\subsection{Defenses against Sample Reconstruction Attacks}\label{sec:related_defense}

Defense against sample reconstruction attacks falls into two categories: (i) traditional generic method~\cite{Mohassel2017,khan2025secure,Zhu2024,Lou2021,Qiu2024}, (ii) learning-based methods, such as lightweight perturbations~\cite{Geyer2017,cui2025aldp,Cao2023,Huang2020,Zou2022,Sun2021,Gu2023,Xu2025} and representation remolds~\cite{Qiu2024,Singh2021,Li2022c,Noorbakhsh2024}.

\par Traditional generic defenses against sample reconstruction include encryption-based methods such as SMC~\cite{Mohassel2017,khan2025secure}, HE~\cite{Zhu2024, Lou2021}, and hashing~\cite{Qiu2024}, which protect client privacy by encrypting or perturbing critical information during forward propagation. The work~\cite{Qiu2024} proposes a hash-based VFL framework that effectively eliminates the reversibility from representations back to data. These studies \cite{Zhu2024, Lou2021} allow for direct arithmetic operations, such as addition and multiplication, on data or its representations. Although these methods offer effective privacy protection, they typically incur significant computational and time overheads.

\par Lightweight perturbations mainly manifest in two forms: input perturbations and representation perturbations. The works~\cite{Geyer2017,cui2025aldp,Cao2023,Huang2020,Zou2022} utilize generative models or introduce noise to create distorted pseudo-synthetic data, which then replace the raw training data to safeguard privacy. Soteria~\cite{Sun2021} estimates and prunes the most privacy-beneficial representation elements. Gu \textit{et al.} embeds a private passport by applying scale and shift factors to the batch normalization (BN) layers, thereby perturbing representations in the forward propagation process~\cite{Gu2023}. More recently, Xu \textit{et al.}~\cite{Xu2025} propose InvL-based adaptive noise perturbation strategies that inject noise calibrated by the spectral properties of the Jacobian matrix, achieving a favorable trade-off between privacy protection and model utility compared to conventional noise-based defenses. Despite the seemingly impregnable design of these methods, Li \textit{et al.}~\cite{Li2022b} find that when adversaries possess enough prior knowledge to make up for the missing details, the defensive efficacy of these techniques is dramatically reduced.

\begin{table}[!t]
\centering
\caption{Comparison with representative learning-based defense methods.}
\label{tab:comparison}
\resizebox{\linewidth}{!}{
\begin{threeparttable}
\begin{tabular}{lccccc}
\toprule
\textbf{Method} & \textbf{Label-free} & 
\textbf{Pre-deployable} & \textbf{Privacy-Utility} & 
\textbf{VFL Compatible} & \textbf{Theo.\tnote{1} Guarantee} \\
\midrule
Soteria~\cite{Sun2021}         & \checkmark & \checkmark & $\sim$      & \checkmark & \checkmark \\
NoPeek~\cite{Vepakomma2020}    & \checkmark & \texttimes  & $\sim$      & \checkmark & \texttimes \\
DISCO~\cite{Singh2021}         & \checkmark & \texttimes  & $\sim$      & \checkmark & \texttimes \\
ResSFL~\cite{Li2022c}          & \texttimes  & \texttimes  & $\sim$      & $\sim$     & \texttimes \\
DMIAFP~\cite{shi2025defending} & \texttimes  & \texttimes  & \checkmark  & \texttimes & \texttimes \\
D3SFL~\cite{Chen2025Dual}      & \texttimes  & \texttimes  & \checkmark  & $\sim$ & \texttimes \\
PriFU~\cite{Bi2024}            & \texttimes  & \texttimes  & \checkmark  & \texttimes & \texttimes \\
Inf2Guard~\cite{Noorbakhsh2024}& \texttimes  & \texttimes  & \checkmark      & $\sim$     & \checkmark \\
InvL~\cite{Xu2025}             & \checkmark & \checkmark & $\sim$      & \checkmark & \checkmark \\
\midrule
\rowcolor{blue!5}\textbf{\name}          & \checkmark & \checkmark & \checkmark  & \checkmark & \checkmark \\
\bottomrule
\end{tabular}
\begin{tablenotes}
\footnotesize
\item[1] Theo. -- Theoretical 
\centering
\item \checkmark: supported; \texttimes: not supported; 
$\sim$: partially supported.
\end{tablenotes}
\end{threeparttable}}
\end{table}

\par Representation remolds are designed to decrease the correlation between the intermediate representations and raw data. Qiu \textit{et al.}~\cite{Qiu2024} develop a framework based on adversarial training. It consists of three modules: adversarial reconstruction, noise regularization, and distance correlation minimization. Because these modules function independently of each other, they can be used singly or in combination. DISCO~\cite{Singh2021} learns a dynamic and data-driven filter to selectively obscure sensitive information within the representation space. ResSFL~\cite{Li2022c} employs attacker-aware training to obtain a resistant feature extractor, which is then used to initialize clients' parameters before each training round to avoid data leakage during training. Inf\textsuperscript{2}Guard~\cite{Noorbakhsh2024} proposes a unified information-theoretic defense framework that minimizes the mutual information between representations and private inputs to preserve privacy, while maximizing the mutual information between representations and labels to maintain utility.

\par Although these methods~\cite{Noorbakhsh2024,Singh2021,Li2022c,Qiu2024} can achieve privacy protection to a certain extent, they treat all information in inputs or representations equally without distinguishing task-relevant from task-irrelevant components or representations, causing them to indiscriminately suppress both types of information and thus inevitably incur non-negligible model utility degradation. To deal with these limitations, some methods~\cite{Bi2024, shi2025defending, Chen2025Dual} seek to preserve task-relevant information while suppressing privacy-sensitive information but fail to be directly extended to VFL due to their reliance on label supervision or complete model parameters. Specifically, DMIAFP~\cite{shi2025defending} reduces both first-order and second-order correlations between private inputs and representations through feature purification, but requires ground-truth labels and a classifier to train its purification module jointly. D3SFL~\cite{Chen2025Dual} introduces variable-structure sub-networks with sparse binary masks to disentangle input-feature dependencies, yet it relies on a locally simulated inversion model trained with a labeled auxiliary dataset and complete model gradients. PriFU~\cite{Bi2024} leverages gradient magnitude to identify and prune task-irrelevant dimensions, yet requires ground-truth labels and end-to-end backpropagation, and can only protect privacy during inference, leaving raw representations exposed throughout training.

These incompatibilities prevent a direct adoption of label-supervised defense in VFL, motivating a novel defense that operates without label supervision and provides full-cycle protection from the first training round. TABLE~\ref{tab:comparison} summarizes the key properties of representative defenses and highlights the limitations that motivate our work.

\section{Theoretical Foundations: Causal Representations Resist Sample Reconstruction}\label{theo_found}


\par While both the causal and non-causal components coexist in the raw representation, we establish a theoretical framework comprising two progressive claims, supported by empirical evidence, to prove that attackers \textit{cannot distinguish specific instances} based solely on the causal representation. First, we prove that the non-causal components of the shared representation are the primary source for high-fidelity sample reconstruction (Theorem~\ref{prop:singular}). Building on it, we demonstrate that transmitting only the causal representation can fundamentally resist sample reconstruction (Theorem~\ref{thm:invloss}).

\par \textbf{Structural Causal Model.} In the SCM~\cite{Scholkopf2022,Tople2020}, each raw input $x$ comprises a blend of causal factors $S$ (task-relevant) and non-causal factors $U$ (task-irrelevant), i.e., $P(x|S, U)$. The causal factors $S$ uniquely determine the class label $y$, i.e., $P(y|S)$, while non-causal factors $U$ encode rich instance-specific variation. While~\cite{Tople2020} leverages causal learning to resist membership inference attacks in centralized settings, our work leverages the privacy benefits of SCM to defend against a different sample reconstruction attack in VFL, operating without label supervision. The SCM construction principles are below:

\begin{principle}[Common Cause Principle~\cite{peters2017,Scholkopf2022}]\label{principle1}
\par When two variables, $X$ and $Y$, exhibit statistical dependence, it indicates the presence of a variable $S$ that causally influences both of them and accounts for the entirety of their interdependence.
\end{principle}

\begin{principle}[Independent Causal Mechanisms 
(ICM) Principle~\cite{Scholkopf2022}]\label{principle2}
The elements in causal factor $\{s_1, s_2, \ldots\} \in S$ are mutually independent: changes on $P(s_i|B_i)$ do not affect $P(s_j|B_j)$ for $i \neq j$, and their joint distribution factorizes as $P(s_1, s_2, \ldots) = \prod_{i=1}^{|S|} P(s_i|B_i)$.
\end{principle}

\par What Principles~\ref{principle1}-\ref{principle2} declare is that elements $\{{s_1},{s_2},...\}$ in causal factors should be jointly independent, and the knowledge contained in elements is not repeated~\cite{Lv2022}.

\begin{mdframed}[backgroundcolor=black!10,rightline=false,leftline=false,topline=false,bottomline=false,roundcorner=2mm]
\subsubsection*{\textbf{Key Insight~1}}
\textit{Non-causal components dominate the singular value directions of the shared representation's Jacobian, making them the primary information source for sample reconstruction attacks.}
\end{mdframed}

To formalize, we first introduce two assumptions grounded in the SCM structure, and then define the representation's Jacobian decomposition.


\begin{assumption}[Non-Causal Instance-Specificity]
\label{assump:noncausal}
Non-causal factors $U$ encode rich instance-specific variation 
beyond class labels, i.e., $I(U;x|y)>0$. This implies that 
the non-causal representation $\bm{r}^u$ must be sufficiently 
sensitive to $U$ across all top-$k$ directions to preserve 
such variation:
\begin{equation}
    \exists\; \epsilon > 0, \quad 
    \sigma_k\!\left(\frac{\partial \bm{r}^u}{\partial U}\right) 
    \geq \epsilon,
\end{equation}
where $\sigma_k(\cdot)$ denotes the $k$-th 
largest singular value measuring the sensitivity of the mapping along each direction.
\end{assumption}

\begin{assumption}[Causal Class-Level Invariance]
\label{assump:causal}
Causal factors $S$ capture class-level invariant features. For any 
two instances $i,j$ with $y_i = y_j$, the invariance condition 
$\bm{r}^s_i \approx \bm{r}^s_j$ implies that $\bm{r}^s$ changes slowly along 
$S$-directions within a class:
\begin{equation}
    \left\|\frac{\partial \bm{r}^s}{\partial S}\right\|_2 \leq \gamma, 
    \quad \text{where } \gamma \ll \epsilon.
\end{equation}
\end{assumption}

\begin{definition}[SCM-Induced Jacobian Decomposition]
\label{def:jacobian}
Let $\bm{r} = f_{\rm bottom}(x) \in \mathbb{R}^{1 
\times J}$ be the intermediate representation shared in VFL, and let $G_x = \partial \bm{r}/\partial x $ be its Jacobian matrix. Under the SCM decomposition $\bm{r} = \bm{r}^s + \bm{r}^u$, the Jacobian decomposes as:
\begin{equation}
    G_x = G^S_x + G^U_x, \quad
    G^S_x = \frac{\partial \bm{r}^s}{\partial x}, 
    \quad
    G^U_x = \frac{\partial \bm{r}^u}{\partial x},
\end{equation}
where $S \perp U$ by the SCM ensures orthogonal row spaces $\mathcal{V}$: $\mathrm{row}(G^S_x) \subseteq \mathcal{V}^S$ 
and $\mathrm{row}(G^U_x) \subseteq \mathcal{V}^U$, with 
$\mathcal{V}^S \perp \mathcal{V}^U$; $\sigma_i(\cdot)$ denotes the $i$-th largest singular value of a matrix, with $\sigma_1(\cdot)$ being the spectral norm;.
\end{definition}

\begin{theorem}[Non-Causal Components Dominate Reconstruction Risk]
\label{prop:singular}
Under Assumptions~\ref{assump:noncausal}-\ref{assump:causal} and Definition~\ref{def:jacobian}, for all $i \in \{1, \ldots, k\}$, the $i$-th largest singular values of the causal and non-causal Jacobians satisfy:
\begin{equation}
    \sigma_i(G^U_x) \gg \sigma_i(G^S_x),
\end{equation}
i.e., the top-$k$ singular directions of $G_x$ are dominated by the non-causal Jacobian $G^U_x$, making non-causal components the primary information source for sample reconstruction.
\end{theorem}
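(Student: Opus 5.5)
The plan is to factor both Jacobians through the SCM generative structure via the chain rule. After that, singular-value inequalities transfer the bounds of Assumptions~\ref{assump:noncausal}--\ref{assump:causal}, stated with respect to the latent factors, to bounds with respect to the input $x$.

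\textbf{Factorization.} Since $x$ is generated from $(S,U)$ and the decomposition $\bm{r}=\bm{r}^s+\bm{r}^u$ has $\bm{r}^s$ depending on $x$ only through $S$ and $\bm{r}^u$ only through $U$, I will write, locally,
\begin{equation*}
G^S_x=\frac{\partial \bm{r}^s}{\partial S}\,\frac{\partial S}{\partial x},\qquad G^U_x=\frac{\partial \bm{r}^u}{\partial U}\,\frac{\partial U}{\partial x}.
\end{equation*}
Here $\partial S/\partial x$ and $\partial U/\partial x$ are the Jacobians of a local inverse of the generator $P(x\mid S,U)$. The orthogonality $\mathcal{V}^S\perp\mathcal{V}^U$ of Definition~\ref{def:jacobian} is what justifies treating the two blocks separately. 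It also gives $\sigma_i(G_x)$ as the merged ordered list of the singular values of $G^S_x$ and $G^U_x$, since $G_xG_x^\top$ splits into the two Gram blocks with no cross terms.

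\textbf{Bounds on each block.} Upper bound for the causal block: by submultiplicativity, $\sigma_i(G^S_x)\le\sigma_1(G^S_x)\le\|\partial\bm{r}^s/\partial S\|_2\,\|\partial S/\partial x\|_2\le\gamma\,\beta_S$, where $\beta_S$ is the spectral norm of the encoder Jacobian. Lower bound for the non-causal block: I use the product inequality $\sigma_i(AB)\ge\sigma_i(A)\,\sigma_{\min}(B)$, valid when $B$ has full row rank. This gives $\sigma_i(G^U_x)\ge\sigma_i(\partial\bm{r}^u/\partial U)\,\sigma_{\min}(\partial U/\partial x)$. By Assumption~\ref{assump:noncausal}, $\sigma_i\ge\sigma_k\ge\epsilon$ for all $i\le k$, so $\sigma_i(G^U_x)\ge\epsilon\,\alpha_U$ with $\alpha_U:=\sigma_{\min}(\partial U/\partial x)>0$.

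\textbf{Combining.} Putting the two bounds together gives, for every $i\le k$,
\begin{equation*}
\frac{\sigma_i(G^U_x)}{\sigma_i(G^S_x)}\ \ge\ \frac{\epsilon}{\gamma}\cdot\frac{\alpha_U}{\beta_S}.
\end{equation*}
Because $\gamma\ll\epsilon$, the ratio is large, which is the claimed $\sigma_i(G^U_x)\gg\sigma_i(G^S_x)$. By the merged-spectrum remark above, the top-$k$ singular directions of $G_x$ then come from $\mathcal{V}^U$. The link to reconstruction follows because a local inversion of $\bm{r}\mapsto x$ recovers input directions at a precision governed by these singular values: large singular values mean well-conditioned, recoverable directions, and these lie in the non-causal subspace.

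\textbf{Main obstacle.} The hardest step is controlling the encoder conditioning factor $\alpha_U/\beta_S$. The assumptions constrain only the latent-to-representation maps, not $x\mapsto(S,U)$. My first attempt would be to impose a mild regularity condition: the generator is a local diffeomorphism with bounded condition number $\kappa$, so that $\alpha_U/\beta_S\ge1/\kappa$. The result would then hold under $\gamma\kappa\ll\epsilon$. A second option is to absorb these constants into the meaning of ``$\gg$''. Either way, this is where the argument is genuinely an assumption rather than a derivation. The full-row-rank requirement for the product inequality would be justified by the same diffeomorphism assumption.
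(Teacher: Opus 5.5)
For the displayed inequality your argument matches the paper's Steps 1--3. You use the same chain-rule factorization and the same two product bounds, $\sigma_i(AB)\ge\sigma_i(A)\,\sigma_{\min}(B)$ and $\sigma_i(AB)\le\sigma_1(A)\,\sigma_1(B)$. Your $\alpha_U,\beta_S$ play the role of the paper's $\mu,\vartheta$. The paper does not derive these encoder constants either: it posits $\sigma_k(\partial U/\partial x)\ge\mu$ and $\|\partial S/\partial x\|_2\le\vartheta$ as ``structural properties of the SCM'' with $\mu,\vartheta$ of comparable order, which is your second option. Defining $\alpha_U:=\sigma_{\min}(\partial U/\partial x)$ under a full-row-rank hypothesis is cleaner than the paper's chain $\sigma_{\min}\ge\sigma_k\ge\mu$, whose first inequality runs the wrong way.

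The step that fails is the merged-spectrum remark you use to pass to the top-$k$ singular directions of $G_x$. Orthogonal row spaces give $G^S_x(G^U_x)^\top=0$, hence $G_xG_x^\top=G^S_x(G^S_x)^\top+G^U_x(G^U_x)^\top$. But this is a sum of two $J\times J$ positive semidefinite matrices whose ranges (the column spaces of $G^S_x$ and $G^U_x$) can overlap, because $\bm{r}^s$ and $\bm{r}^u$ live in the same $\mathbb{R}^{J}$. The eigenvalues of such a sum are not the union of the two spectra. Concretely, take $J=1$, $d=2$, $G^S_x=[a,\,0]$, $G^U_x=[0,\,b]$. The row spaces are orthogonal, yet $G_x=[a,\,b]$ has the single singular value $\sqrt{a^2+b^2}$. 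Its top right singular vector $(a,b)^\top/\sqrt{a^2+b^2}$ does not lie in $\mathcal{V}^U$. Exact merging would also need column-space orthogonality, which Definition~\ref{def:jacobian} does not provide.

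What holds is only approximate alignment, and it requires a perturbation argument; this is the paper's Step 4. It treats $G^S_x$ as a perturbation of $G^U_x$ of spectral norm at most $\gamma\vartheta$. It then applies Wedin's $\sin\Theta$ theorem with gap $\delta=\epsilon\mu-\gamma\vartheta$ to get $\|P_{\mathcal{V}^S}V_{J,i}\|\le\gamma\vartheta/\delta\ll1$ for $i\le k$. For the singular values themselves, Weyl's inequality $|\sigma_i(G_x)-\sigma_i(G^U_x)|\le\|G^S_x\|_2$ suffices. In the example above, the $\mathcal{V}^S$ component $a/\sqrt{a^2+b^2}\approx a/b$ is of exactly this order. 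So replace the merged-spectrum claim with this bound, and conclude that the top-$k$ directions are predominantly aligned with $\mathcal{V}^U$ rather than that they come from $\mathcal{V}^U$.
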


\begin{proof}
See Appendix~\ref{append:A} for details. The theoretical foundation of Theorem~\ref{prop:singular} builds on the work~\cite{Xu2025}, which established that the top-$k$ largest singular values of the representation's Jacobian matrix determine the directions most exploitable for reconstruction. In other words, larger singular values correspond to directions along which the input-to-representation mapping is most invertible, providing adversaries with more recoverable information and thus higher reconstruction risk. However, \cite{Xu2025} does not analyze which components within the representation (e.g., causal or non-causal) dominate these singular value directions and reconstruction risk. Grounded in this gap, we theoretically prove that the top-$k$ singular values of the non-causal Jacobian $G^U_x$ strictly dominate those of 
the causal Jacobian $G^S_x$, i.e., $\sigma_i(G^U_x) \gg \sigma_i(G^S_x)$ for all $i \in \{1,\ldots,k\}$, confirming that non-causal components constitute the dominant reconstruction attack surface. 
\end{proof}

\begin{mdframed}[backgroundcolor=black!10,rightline=false,leftline=false,topline=false,bottomline=false,roundcorner=2mm]
\subsubsection*{\textbf{Key Insight~2}}
\textit{Sharing causal representations can raise the lower bound of reconstruction error, rendering sample reconstruction fundamentally infeasible.}
\end{mdframed}

The Invertibility Loss 
(i.e., $\mathrm{InvLoss}$) defined in~\cite{Xu2025} quantifies the minimum reconstruction error achievable by the attacker for a given input $x$ from shared raw representation $\bm{r}$, which for a rank-$k$ optimal attacker is bounded as:
\begin{equation}
\label{eq:invloss}
    \mathrm{InvLoss}_x = \min_{A_x} 
    \|A_x(\bm{r}) - x\|^2 \leq
    \sum_{i=k+1}^{d}(V_{J,i}^\top x)^2 + C_0,
\end{equation}
where $A_x$ denotes the optimal inverse transformation; $G_x = U_J\Sigma V_J^\top$ is the SVD of the Jacobian; $V_{J,i}$ is the $i$-th right singular vector corresponding to the $i$-th largest singular value; and $C_0 = o(\delta^2)$ is the higher-order remainder from local linearization, treated as a constant. A larger residual tail energy $\sum_{i=k+1}^{d}(V_{J,i}^\top x)^2$ corresponds to a higher $\mathrm{InvLoss}$, meaning less information about $x$ is recoverable from $\bm{r}$ and the attacker's reconstruction is provably 
inaccurate.

\begin{theorem}[Causal Representations Resist Sample Reconstruction]
\label{thm:invloss}
Let $\tau_{\mathrm{th}} := \sum_{i=k+1}^{d}(V_{J,i}^\top x)^2 
    + \sum_{i=1}^{k}
    \frac{(U_i^\top \varepsilon)^2}{\sigma_i^2 \cdot p} 
    + C_0$ denote the defense success threshold, where a defense succeeds if and only if it raises $\mathrm{InvLoss}$ strictly above $\tau_{\mathrm{th}}$. Transmitting only the causal representation $\bm{r}^s$ yields a reconstruction error lower bound strictly and substantially exceeds $\tau_{\mathrm{th}}$:
\begin{equation}
\label{eq:lower_bound}
    \mathrm{InvLoss}(\bm{r}^s) \geq 
    (1 - k\gamma^2)\|x^s\|^2 + C_0 \gg \tau_{\mathrm{th}}.
\end{equation}
\end{theorem}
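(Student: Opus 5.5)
The plan is to run the local-linearization argument of~\cite{Xu2025} behind~\eqref{eq:invloss} again, this time for the map $x\mapsto\bm{r}^s$ instead of $x\mapsto\bm{r}$. Near $x$ we write $\bm{r}^s(x+\delta)\approx \bm{r}^s(x)+G^S_x\delta$. An optimal attacker $A_x$ restricted to this linearization can only recover the component of $x$ lying in the span of the top right singular vectors of $G^S_x$. Moreover, it recovers a direction $v_i$ only to the extent that $\sigma_i(G^S_x)$ is large; otherwise the direction is swamped by the noise and the higher-order remainder $C_0$.

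Using Definition~\ref{def:jacobian}, we split the input as $x = x^s + x^u$, where $x^s$ is the projection onto $\mathcal{V}^S$ and $x^u$ the projection onto $\mathcal{V}^U$. Since $\mathrm{row}(G^S_x)\subseteq\mathcal{V}^S$, the observation $\bm{r}^s$ carries no first-order information about $x^u$. Hence $x^u$ is lost entirely, and the residual energy $\|x^u\|^2\ge 0$ can be dropped to obtain a lower bound.

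For $x^s$, we bound what the attacker can recover along the top-$k$ causal directions. Write $G^S_x = U^S\Sigma^S (V^S)^\top$. Following the rank-$k$ attacker in~\eqref{eq:invloss}, the energy of $x^s$ that is recoverable is at most $\sum_{i=1}^k \sigma_i(G^S_x)^2 (V^{S\top}_i x^s)^2/\!\max(\cdot)$. We normalize in the same way as Assumption~\ref{assump:causal}, which measures the Jacobian relative to unit sensitivity, and we use $|V_i^{S\top}x^s|\le\|x^s\|$. With $\sigma_i(G^S_x)\le\|\partial\bm{r}^s/\partial S\|_2\le\gamma$ (chain rule through $S$), the recovered fraction is at most $\sum_{i=1}^k\gamma^2\|x^s\|^2 = k\gamma^2\|x^s\|^2$. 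The unrecovered part of $x^s$ therefore has energy at least $(1-k\gamma^2)\|x^s\|^2$. Adding the linearization remainder gives $\mathrm{InvLoss}(\bm{r}^s)\ge(1-k\gamma^2)\|x^s\|^2+C_0$.

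For the comparison with $\tau_{\mathrm{th}}$, we invoke Theorem~\ref{prop:singular}. The top-$k$ right singular vectors $V_{J,i}$ of $G_x$ are dominated by $G^U_x$, so they lie (approximately) in $\mathcal{V}^U$. It follows that the tail $\sum_{i>k}(V_{J,i}^\top x)^2$ essentially captures only the non-dominant residual of $x^u$ together with the part of $x^s$ outside the top directions. The noise term $\sum_i (U_i^\top\varepsilon)^2/(\sigma_i^2 p)$ is small because $\sigma_i=\sigma_i(G_x)\ge\epsilon$ is large by Assumption~\ref{assump:noncausal}. Since $\gamma\ll\epsilon$ gives $1-k\gamma^2\approx 1$, the lower bound is roughly $\|x^s\|^2+C_0$, and this exceeds $\tau_{\mathrm{th}}$ provided the raw attacker's tail energy is small compared with $\|x^s\|^2$.

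I expect the main obstacle to be this last step. Making ``$\gg$'' rigorous requires an explicit assumption that the raw-representation tail $\sum_{i>k}(V_{J,i}^\top x)^2$ is small relative to $\|x^s\|^2$, i.e.\ that the raw $\bm{r}$ is nearly invertible. I would state this as the regime in which an attack succeeds without defense, and say that the theorem holds within that regime. A secondary gap is the normalization that turns $\sigma_i\le\gamma$ into a recoverable fraction of $\gamma^2$. I would handle it by assuming unit-scale Jacobians and unit-norm inputs, which is consistent with how~\cite{Xu2025} normalizes.
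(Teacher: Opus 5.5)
Your outline follows the paper's proof almost step for step: linearize, cap what a rank-$k$ attacker recovers from $\bm{r}^s$ at $k\gamma^2\|x^s\|^2$, then use Theorem~\ref{prop:singular} to call the raw tail small. You are right that the last comparison is the obstacle, but the regime assumption you propose cannot rescue it. Theorem~\ref{prop:singular} says the top-$k$ right singular vectors of $G_x$ lie almost in $\mathcal{V}^U$, which puts $x^s$ \emph{into} the tail of $\tau_{\mathrm{th}}$, not out of it. Set $P_k=\sum_{i\le k}V_{J,i}V_{J,i}^\top$ and $\beta:=\max_{i\le k}\|P_{\mathcal{V}^S}V_{J,i}\|\ll 1$. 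Then $\|P_kx^s\|^2\le k\beta^2\|x^s\|^2$, and hence
\[
\sum_{i>k}(V_{J,i}^\top x)^2=\|x\|^2-\|P_kx\|^2\ \ge\ (1-k\beta^2)\|x^s\|^2-2\sqrt{k}\,\beta\,\|x^s\|\,\|x^u\|.
\]
So $(1-k\gamma^2)\|x^s\|^2+C_0$ can exceed $\tau_{\mathrm{th}}$ by at most $k\beta^2\|x^s\|^2+2\sqrt{k}\beta\|x^s\|\|x^u\|=O(\sqrt{k}\beta\|x\|^2)$. Moreover, your condition ``raw tail $\ll\|x^s\|^2$'' forces $\|x^s\|=O(\sqrt{k}\beta\|x\|)$, so it holds only when your bound is itself negligible. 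Take the simplest instance: $d=2$, $k=1$, $G_x=\mathrm{diag}(\sigma_U,\sigma_S)$ with $\sigma_U>\sigma_S$ and $\mathcal{V}^U=\mathrm{span}(e_1)$. The tail is exactly $\|x^s\|^2$, and $\tau_{\mathrm{th}}\ge\|x^s\|^2+C_0\ge(1-k\gamma^2)\|x^s\|^2+C_0$, so the relation $(1-k\gamma^2)\|x^s\|^2+C_0\gg\tau_{\mathrm{th}}$ is false there. The paper's own Step~3 has the same flaw: it replaces the tail by $\sum_{i>k}(V_{J,i}^\top x^U)^2$, dropping $x^s$, so it will not supply the missing step.

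The repair is already in your second paragraph: do not discard $\|x^u\|^2$. Granting your cap on the recovered part of $x^s$, you get $\mathrm{InvLoss}(\bm{r}^s)\ge\|x^u\|^2+(1-k\gamma^2)\|x^s\|^2+C_0$. Then $\mathrm{InvLoss}(\bm{r}^s)-\tau_{\mathrm{th}}\ge\|P_kx\|^2-k\gamma^2\|x^s\|^2-\sum_{i\le k}(U_i^\top\varepsilon)^2/(\sigma_i^2p)$, so the margin is exactly the energy the undefended rank-$k$ attacker recovers. That is the correct form of your ``attack succeeds without defense'' regime, but it proves a bound with $\|x^u\|^2$ added, not the one displayed.

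The cap itself is the second gap, and normalization does not close it. In the noiseless linearization $\bm{r}^s(x+\delta)\approx\bm{r}^s(x)+G^S_x\delta$, the pseudo-inverse recovers the entire projection of $\delta$ onto $\mathrm{row}(G^S_x)$, however small the nonzero $\sigma_i(G^S_x)$ are. This is why the tail in \eqref{eq:invloss} depends on the singular values only through their ordering. Rescaling to ``unit-scale Jacobians'' changes nothing, since a rescaled injective map is still injective. Small singular values hurt the attacker only against a noise or resolution floor, which is what the term $(U_i^\top\varepsilon)^2/(\sigma_i^2p)$ encodes. You must posit such a floor explicitly and carry it through: a perturbation of scale $\nu$ gives per-direction error about $\min\{(V^{S\top}_{J,i}x^s)^2,\ \nu^2/\sigma_i^2(G^S_x)\}$. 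Note also that the chain rule gives $\sigma_i(G^S_x)\le\gamma\vartheta$, as in the proof of Theorem~\ref{prop:singular}, not $\gamma$. The paper's Step~2 has the same hole: in $\sum_{i\le k}(V^{S\top}_{J,i}x^s)^2\le k\sigma_1^2(G^S_x)\|x^s\|^2$, the left side is invariant under rescaling $G^S_x$ and the right side is not.
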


\begin{proof}
See Appendix~\ref{append:B} for details. \cite{Xu2025} proves the InvLoss upper bound for noise-based representation perturbation, showing that such perturbation can resist sample reconstruction albeit at the cost of representation utility. We adopt this upper bound as the defense success threshold $\tau_{\mathrm{th}}$. Building on this, Theorem~\ref{thm:invloss} further proves that transmitting only causal representations yields $\mathrm{InvLoss}(\bm{r}^s)$ far exceeding the defense success threshold $\tau_{\mathrm{th}}$. Intuitively, this result stems from the fundamental roles that causal components play in both the primary task and data reconstruction, capturing invariant features shared across instances of the same class~\cite{Mahajan2021}. Since samples from the same class exhibit significant similarity in their task-relevant features, adversaries cannot distinguish specific instances based solely on these features.
\end{proof}

\begin{figure}[htbp]
    \centering
    \subfloat[\scriptsize InvRE($\downarrow$)]{\includegraphics[width=0.23\textwidth]{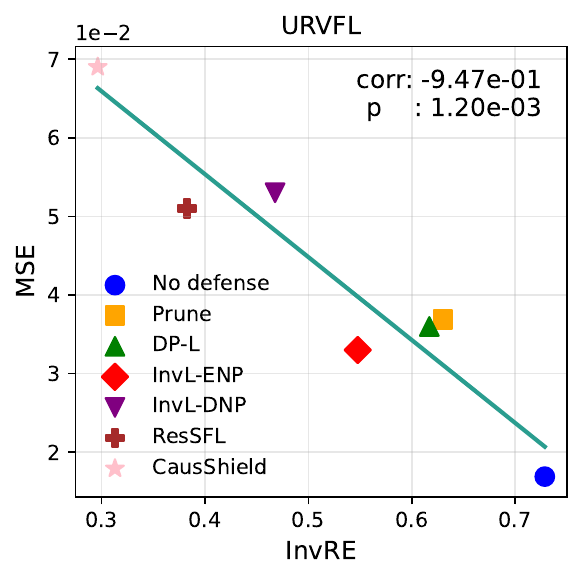}\label{fig:invre}}
    \subfloat[\scriptsize Cosine similarity($\uparrow$)]{\includegraphics[width=0.23\textwidth]{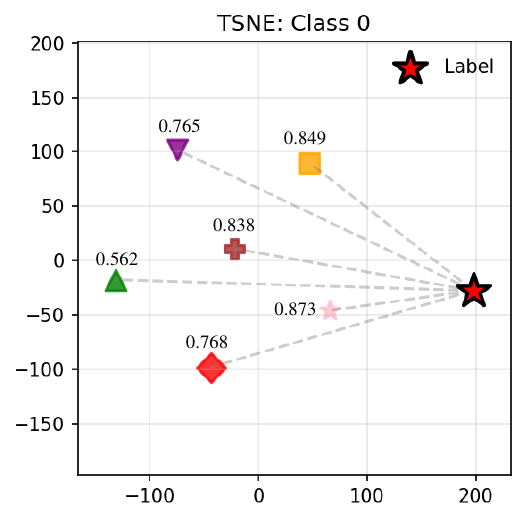}\label{fig:cosine}}
    \caption{Evidence of reconstruction risk and task relevance.}
    \label{fig:combined}
\end{figure}

\noindent\textbf{Empirical Evidence:} Fig.~\ref{fig:combined} jointly validates both Theorems~\ref{prop:singular}-\ref{thm:invloss}. Fig.~\ref{fig:combined}(\subref*{fig:invre}) shows the correlation between the InvRE score (which measures the reconstruction risk of shared representations derived from InvLoss) and the reconstruction mean square error (MSE) on CIFAR-10. A lower InvRE means less invertible information. Among all compared methods, \name achieves the lowest InvRE ($0.296$) and the highest reconstruction MSE ($0.069$). This confirms that transmitting only causal representations minimizes information exploitable to attackers and resists reconstruction attacks, validating Theorem~\ref{thm:invloss}. In contrast, when no defense applied, the full representation $\bm{r} = \bm{r}^s + \bm{r}^u$ contains both causal and non-causal components, yielding the highest InvRE ($0.729$) and the lowest MSE ($0.017$). Since the full representation can be decomposed into $\bm{r}^s$ and $\bm{r}^u$, the substantially higher reconstruction risk compared to \name confirms that non-causal components $\bm{r}^u$ constitute the primary information source exploited by the attacker, providing empirical support for Theorem~\ref{prop:singular}.

Fig.~\ref{fig:combined}(\subref*{fig:cosine}) evaluates the task-relevance of representations by measuring their cosine similarity to a causal proxy. Specifically, we select samples that maintain stable neighborhood relationships across multiple convolutional layers with high classification confidence as proxies for task-relevant causal information; higher cosine similarity indicates that the transmitted representation preserves more task-relevant causal information. As shown, \name achieves the highest cosine similarity to the causal proxy ($0.873$), outperforming ResSFL ($0.838$), InvL-ENP ($0.768$), Prune ($0.765$), and InvL-DNP ($0.562$). This demonstrates that while \name aggressively suppresses non-causal components to minimize reconstruction risk, it simultaneously preserves causal components most faithfully, achieving a superior privacy-utility trade-off over existing defense methods.

\section{\name}
\label{sec:4}

\par We define the threat model and then overview the \name and elaborate on its two core module designs.

\subsection{Threat Model}


\par \textit{1) Attacker Capability and Goal:} We adopt the widely used honest-but-curious assumption for the active party. The attack surface in our setting aligns with the standard assumptions in VFL sample reconstruction works~\cite{Pasquini2021,Erdogan2022,Yao2025}. In this setting, the active party faithfully adheres to the federated protocol but attempts to infer the passive party's private training data. As the central adversary, the active party controls the top model $f_{\rm top}$ and has full access to the ground-truth labels $\mathcal{Y}$. Furthermore, the adversary is assumed to be aware of the passive party's local model architecture and can intercept the intermediate representations shared during training. Leveraging knowledge of these representations alongside passive party's model structure, the active party aims to reconstruct the corresponding private data~\cite{Erdogan2022}. To further enhance the fidelity of the reconstructed data, the adversary can also possess an auxiliary dataset that follows a similar distribution to the training data~\cite{Yao2025}. Each sample in this auxiliary dataset contains a complete set of features and its corresponding label. 

\par \textit{2) Defender Capability and Goal:} The defense objective of passive parties is to reshape the intermediate representations before sharing, reducing the exposure of private information while preserving representation utility. The passive party has full knowledge of its local training data, the architecture and parameters of its bottom model, and the entire training process. Meanwhile, the passive party strictly protects local data privacy by prohibiting any form of direct data query. Passive parties cannot communicate with each other, nor can they access any auxiliary datasets or additional label information. We assume that the passive party possesses relevant but minimal machine learning knowledge, enabling it to understand the attack mechanism and implement defense strategies.

\subsection{Overview}\label{subsec IV.overview}

\par Fig.~\ref{Fig.3} illustrates the workflow of our proposed \name, which comprises two unsupervised modules: \textit{surrogate data generation} and \textit{causal representation learning}, addressing \textbf{C1} and \textbf{C2} respectively. The former module is performed \textit{offline} before VFL training; the latter is executed prior to each round of representation transmission, thereby providing full-cycle privacy protection.

\begin{figure*}[!t]
	\centering
	\includegraphics[width=6.5in]{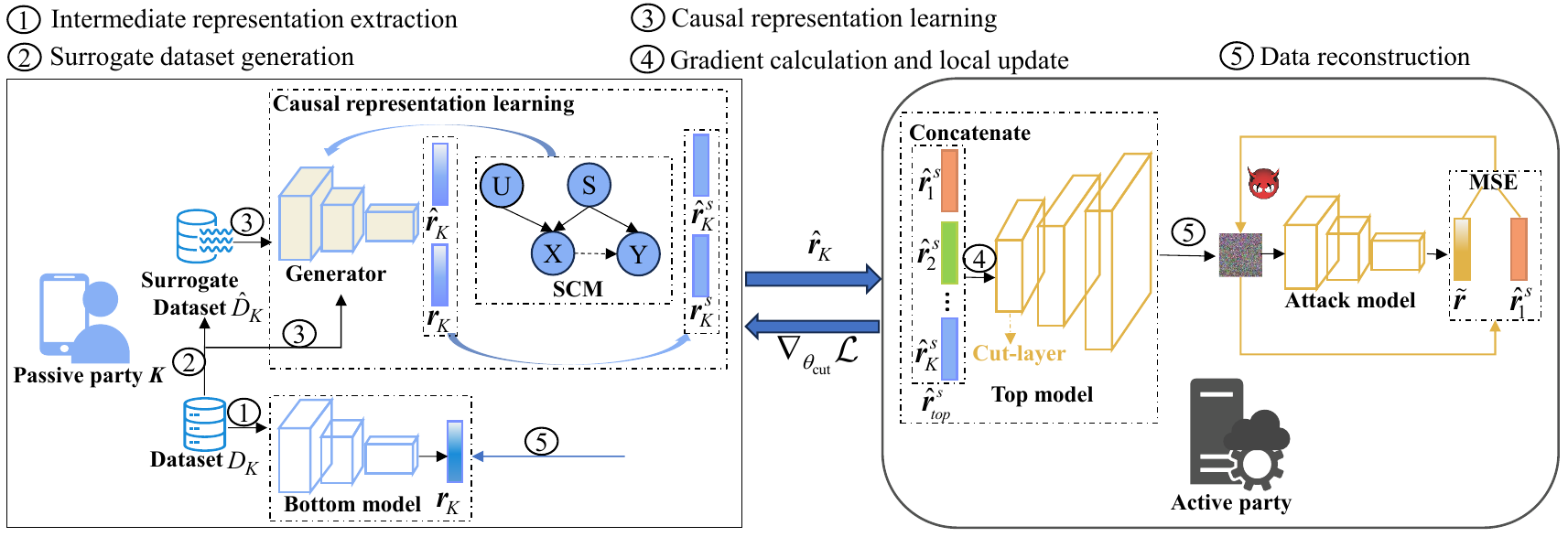}
	\caption{\name overview.}
	\label{Fig.3}
\end{figure*}

\par The \textbf{surrogate dataset generation module} synthesizes a surrogate input $\hat{x}_{k,i}$ for each raw input $x_{k,i}$ in the passive party's local dataset $D_k$, yielding a surrogate dataset $\hat{D}_k$. The surrogate $\hat{x}_{k,i}$ is constructed to satisfy two requirements: it diverges from $x_{k,i}$ in task-irrelevant (i.e., non-causal) components $U$ (e.g., color), while remaining consistent in task-relevant (i.e., causal) components $S$ (e.g., semantic structure, shape). Such surrogate pairs $(x_{k,i}, \hat{x}_{k,i})$ provide the contrastive signal necessary for causal representation learning without requiring any label supervision.

\par The \textbf{causal representation learning module} leverages the surrogate pairs to distill causal representations in a fully unsupervised manner. Specifically, the passive party $k$ duplicates its bottom model as a generator $G_k$, and feeds both $x_{k,i}$ and $\hat{x}_{k,i}$ into $G_k$ to obtain the raw representation $\bm{r}_{k,i}$ and the surrogate representation $\hat{\bm{r}}_{k,i}$. Guided by Principles~\ref{principle1}-\ref{principle2}, the generator is then optimized to transform $\bm{r}_{k,i}$ and $\hat{\bm{r}}_{k,i}$ into causal representations $\bm{r}^s_{k,i}$ and $\hat{\bm{r}}^s_{k,i}$ that satisfy two properties: (1) same-dimension correlations between $\bm{r}^s_{k,i}$ and $\hat{\bm{r}}^s_{k,i}$ are maximized, ensuring invariance to non-causal changes; (2) cross-dimension correlations are minimized, ensuring mutual independence among causal factors. The causal representation ${\bm{r}}^s_{k,i}$ is then uploaded to the active party for joint prediction, while the bottom model parameters are updated via the cut-layer gradients $\nabla_{\theta_\mathrm{cut}}\mathcal{L}$ backpropagated from the active party.

\subsection{Surrogate Dataset Generation}\label{sec:surrogateDataset}

\par To address \textbf{C1}, we design this module to synthesize a surrogate for each raw input that differs greatly in non-causal components while preserving causal ones. The motivation lies in the invariant causal mechanisms~\cite{peters2017}: changes in task-irrelevant features (e.g., color) alter only visual appearance without influencing $P(y|S)$, such that surrogate data can significantly facilitate causal representation learning~\cite{Ahuja2023}. 

\par As shown in Fig.~\ref{Fig.4}, we decompose the RGB image $x$ into luminance $x_L$ and chrominance $x_{ab}$ in Lab color space (also known as CIELAB color space), and design a dual-branch encoder consisting of two subnetworks $A_1$ and $A_2$ with two core designs: \textit{(i) Altering Non-Causal Components}: $A_1$ generates chrominance output $\hat{x}_{ab}$ with substantially different appearance from the original $x_{ab}$, by adopting CE loss over quantized color distributions and window-based variance regularization to disrupt both global and local color patterns. \textit{(ii) Preserving Causal Components}: $A_2$ reconstructs luminance 
output $\hat{x}_L$ that faithfully retains the structural content of the original $x_L$, ensuring causal consistency between $\hat{x}$ and $x$. Upon the completion of the dual-branch encoder training process, ${\hat x_L}$ and ${\hat x_{ab}}$ are consolidated into the surrogate data $\hat x = \{ {\hat x_L},{\hat x_{ab}}\}$.

\begin{figure}[htb]
	\centering
	\includegraphics[width=3.2in]{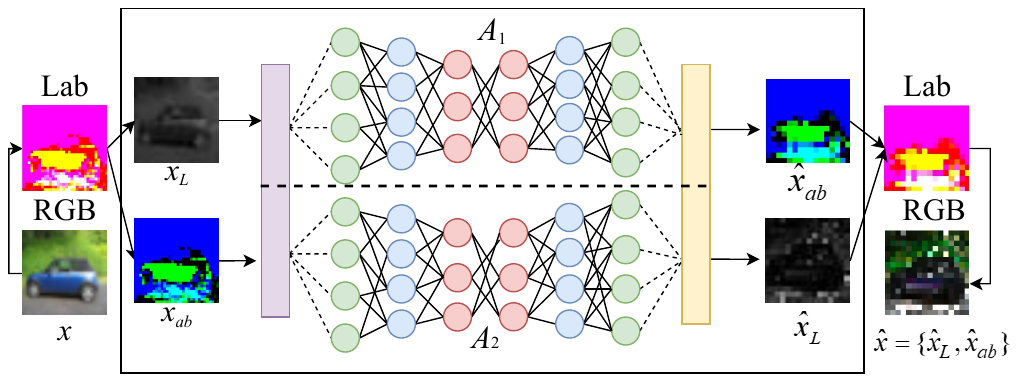}
	\caption{Surrogate data generation paradigm.}
	\label{Fig.4}
\end{figure}



\noindent \textbf{Altering Non-Causal Components.} 
Conventional colorization methods~\cite{Zhang2017,kang2023ddcolor} minimize pixel-level distance loss (e.g., Euclidean or $L_1$ norm), driving the subnetwork to faithfully reproduce exact pixel values and producing $\hat{x}_{ab}$ that remains chromatically close to $x_{ab}$, thus failing to alter non-causal components sufficiently. To address this, we introduce the following two designs for $A_1$ that responsible for automatic colorization ($x_L \rightarrow \hat x_{ab}$): 

\textit{CE Loss over Quantized Color Distributions.} We adopt CE loss $l^{ce}$ over quantized color distributions to replace pixel-level distance loss, enabling $A_1$ to learn diverse color statistics rather than precise pixel values and generating $\hat{x}_{ab}$ with a substantially different global color distribution from $x_{ab}$:
\begin{flalign} \label{eq7}
    {A_1}^* &= \mathop {\arg \min }\limits_{{A_1}} 
    l^{ce}({A_1}({x_L}),{{x}_{ab}}) \nonumber &\\
    &= \mathop {\arg \min }\limits_{{A_1}} 
    - \sum\limits_{h,w} {\sum\limits_q 
    ({x_{ab}})_{h,w,q}} 
    \log ({A_1({x_L})_{h,w,q}}).  &
\end{flalign}


\textit{Window-Based Variance 
Regularization.} Although CE loss for $A_1$ alters the global color distribution of $\hat{x}_{ab}$, local color patterns within small spatial regions may still resemble those of $x$, providing identifiable cues that adversaries could exploit for reconstruction~\cite{Luo2025}. To this end, we augment $A_1$'s objective with a window-based variance regularization term. By dividing 
$\hat{x}_{ab}$ into $n$ non-overlapping windows of size $s \times s$ and penalizing deviations from a target variance $v_e$, we actively disrupt local color consistency between $\hat{x}_{ab}$ and $x_{ab}$:
\begin{equation} \label{eq9}
	\begin{split}
		\mathop {\arg \min }\limits_{{A_1}^*} & - \sum\limits_{h,w} {\sum\limits_q {{{({{x}_{ab}})}_{h,w,q}}} } \log {(({\hat x_{ab}})_{h,w,q}})   \\
		&+ \mu \sum\limits_i {(Var{{(wi{n_i}({{\hat x}_{ab}}) - {v_e})}^2}},
	\end{split}
\end{equation}
where $\mathrm{Var}(\cdot)$ computes the pixel variance within each window and $v_e$ governs the expected degree of local color variation.

\noindent \textbf{Preserving Causal Components Consistency.} For $A_2$ (tasked with grayscale prediction ($x_{ab} \rightarrow \hat x_L$)), since luminance encodes structural information (i.e., object shapes and edges) that constitutes causal components, we minimize the Euclidean distance $l^{Euc}$ to ensure $\hat{x}_L$ faithfully preserves the structural characteristics of $x_L$, maintaining causal consistency between $\hat{x}$ and $x$:
\begin{equation} \label{eq8}
	\begin{aligned}
		{A_2}^* &= \mathop {\arg \min }\limits_{{A_2}} l^{Euc}({A_2}({x_{ab}}),{x_L})\\
		&= \mathop{\arg\min}_{{A_2}} 
\sum\limits_{h,w}\|{A_2}({x_{ab}})_{h,w} - 
({x_L})_{h,w}\|^2.
	\end{aligned}
\end{equation}

\par Overall, the final surrogate data $\hat{x} = \{\hat{x}_L, \hat{x}_{ab}\}$ serves as a high-quality contrastive signal for the subsequent causal representation learning module.

\subsection{Causal Representation Learning}\label{subsec IV.C}




\par To address \textbf{C2}, we design this module to unsupervisedly learn causal representation solely with the aid of surrogate representation ${\bm{\mathit{\hat r}}_{k,i}} = {G_k}({\hat x_{k,i}})$. Guided by the Principles~\ref{principle1}-\ref{principle2}, we first define two properties that must be satisfied to obtain accurate causal representations $\bm{r}^s_{k,i}$:

\par \textbf{Property 1.} Changes on the non-causal components $U$ do not affect the causal representation $\bm{\mathit{r}}_{k,i}^s$.
\par \textbf{Property 2.} The elements $\{r_{k,i}^{s,1},r_{k,i}^{s,2},...,r_{k,i}^{s,J}\}$ of causal representation $\bm{\mathit{r}}_{k,i}^s$ are jointly independent and decomposable.

\par As shown in Fig.~\ref{Fig.5}, to satisfy the above two properties while preserving representation utility, we introduce two core designs: \textit{(i) Causal Representation Extraction}: we measure the correlations between raw and surrogate representations across all dimensions via a correlation matrix $\mathbf{C}$, and design a decomposition loss $l_{Dec}$ to jointly maximize same-dimension correlations and minimize cross-dimension correlations, extracting causal components efficiently; \textit{(ii) Task Relevance Augmentation}: since not all causal dimensions carry sufficient task-relevant information, we design a masker $\bm{\omega}$ to evaluate each dimension’s contribution and distinguish high-contribution upper dimensions from low-contribution lower ones. Through adversarial training between the generator $G_k$ and the masker $\bm{\omega}$, lower dimensions are progressively enriched with additional task-relevant information, improving the overall utility of the causal representations.


\begin{figure*}[htb]
	\centering
	\includegraphics[width=5.5in]{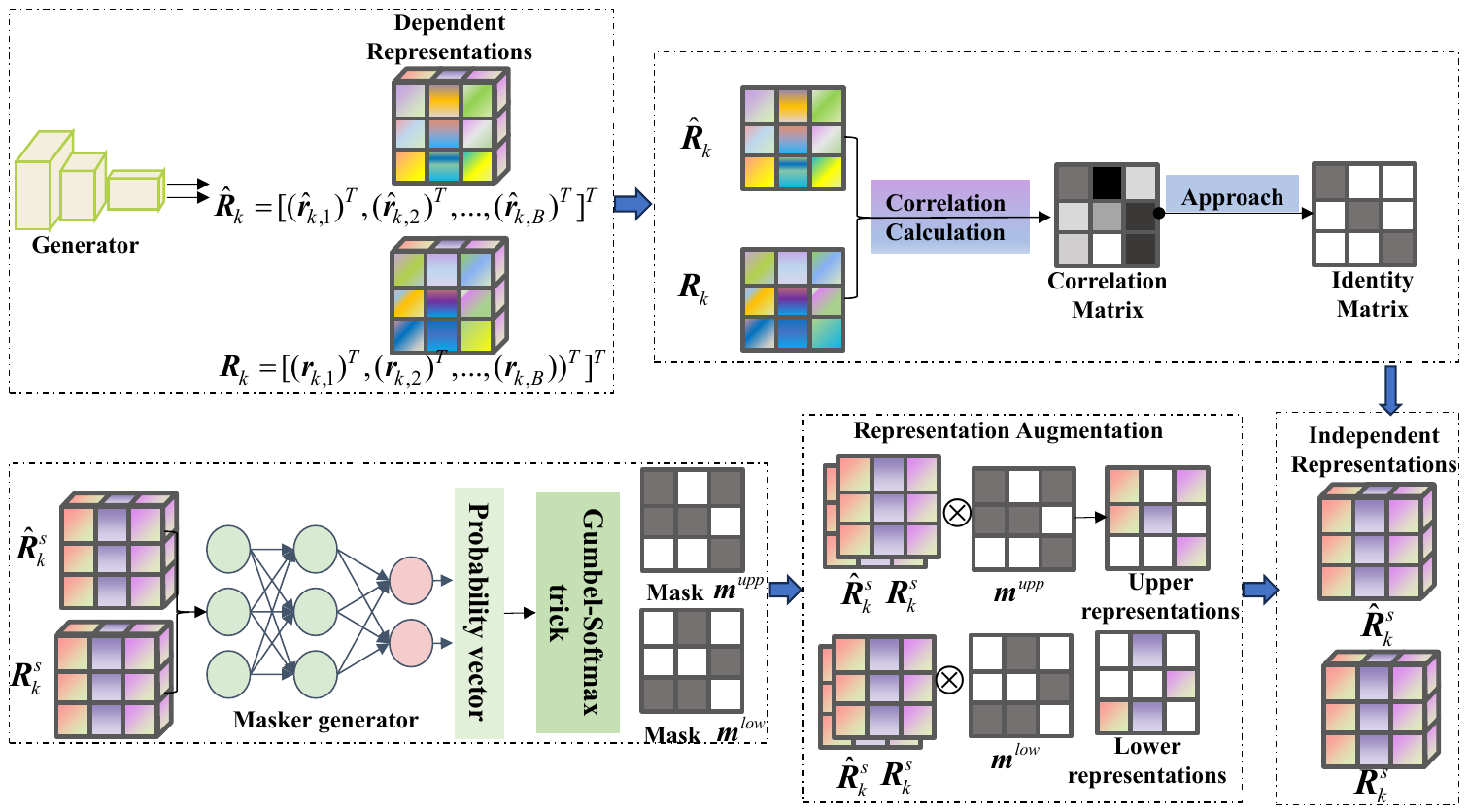}
	\caption{Causal representation learning process.}
	\label{Fig.5}
\end{figure*}

\noindent \textbf{Causal Representation Extraction.} We employ a cross-dimensional correlation function (denoted as \textit{COR}) to quantify the correlation between raw and surrogate representation pairs ($\bm{r}_{k,i}$, $\hat{\bm{r}}_{k,i}$), which are Z-score normalized for fair assessment. 

\par To retain dimensions invariant to non-causal changes (\textit{satisfying Property 1}), we optimize ${G_k}$ to maximize the same-dimension correlation:
\begin{equation} \label{eq12}
	\mathop {\max }\limits_{{G_k}} \frac{1}{J}\sum\limits_{j = 1}^J {COR(r_{k,i}^j,\hat r_{k,i}^j)} ,
\end{equation}
where $r_{k,i}^j$ and $\hat r_{k,i}^j$ represent the Z-score normalized $j$-th column of the matrix ${{\bf{R}}_k} = {[{({\bm{\mathit{r}}_{k,1}})^T},{({\bm{\mathit{r}}_{k,2}})^T},...,{({\bm{\mathit{r}}_{k,B}})^T}]^T}$ and ${{\bf{\hat R}}_k} = {[{({\bm{\mathit{\hat r}}_{k,1}})^T},{({\bm{\mathit{\hat r}}_{k,2}})^T},...,{({\bm{\mathit{\hat r}}_{k,B}})^T}]^T}$, respectively; $B$ denotes the current batch size and ${{\bf{R}}_k} \in \mathbb{R}{^{B \times J}}$.

\par To ensure each dimension captures independent and non-redundant causal information (\textit{satisfying Property 2}), we aloptimize ${G_k}$minimize cross-dimensional correlations:
\begin{equation} \label{eq13}
	\min_{G_k} \frac{1}{J(J - 1)} \sum_{{j_1 \neq j_2}} \textit{COR}(r_{k,i}^{j_1},\hat r_{k,i}^{j_2}), \quad j_1 \neq j_2.
\end{equation}

\par Note that optimizing Formulas~(\ref{eq12}) and~(\ref{eq13}) separately requires two independent passes over all $J(J-1)$ dimension pairs, incurring quadratic computational cost $\mathcal{O}(J^2)$. To address this efficiency bottleneck, we consolidate them into a single cross-dimensional correlation matrix $\mathbf{C}$:


\begin{equation} \label{eq14}
	\mathbf{C}_{j_1,j_2} = \frac{\langle r_{k,i}^{j_1},\hat{r}_{k,i}^{j_2}\rangle }{||r_{k,i}^{j_1}||\,||\hat{r}_{k,i}^{j_2}||}, \quad j_1, j_2 \in \{1,2,\ldots,J\},
\end{equation}
where the inner product operation is indicated by $\langle  \cdot \rangle $.

\par Hence, the identical dimensions of ${{\bf{R}}_k}$ and ${{\bf{\hat R}}_k}$ should increase in correlation, while the different dimensions should decrease. With this in mind, we develop a decomposition loss, denoted as ${l_{Dec}}$, that drives diagonal elements toward 1 and off-diagonal elements toward 0, formulated as follows:
\begin{equation} \label{eq15}
	\mathit{l}_{\mathit{Dec}} = \frac{1}{2}\left\| \mathbf{C} - \mathbf{I} \right\|_{\mathit{F}}^2,
\end{equation}
where ${\bf{I}}$ is Identity Matrix. By optimizing ${G_k}$ to minimize the decomposition loss ${l_{Dec}}$, we can successfully disentangle the raw and surrogate pair’s representations (i.e., $\bm{r}_{k,i}$ and $\hat{\bm{r}}_{k,i}$) into causal representations $\bm{r}^s_{k,i}$ and causal surrogate representations $\hat{\bm{r}}^s_{k,i}$.

\noindent \textbf{Task-revelence Augmentation.} Moreover, not all dimensions of causal representation $\bm{r}^s_{k,i}$ contribute equally to the primary task, some dimensions may carry insufficient task-relevant information and contribute minimally. Since Property 2 ensures dimensions are mutually independent with non-overlapping information, these suboptimal dimensions represent wasted capacity that could otherwise encode additional task-relevant knowledge. Therefore, we establish a CNN-based masker $\bm{\omega}$ that assigns a contribution score 
$z_j \in [0,1]$ to each dimension $j$ of $\bm{r}^s_{k,i}$:
\begin{equation} \label{eq17}
	{\bf{z}} = \text{Sigmoid}(\bm{\omega}(\bm{r}^s_{k,i})) \in \mathbb{R}^{1 \times J},
\end{equation}
and analogously $\hat{\mathbf{z}} = 
\mathrm{Sigmoid}(\bm{\omega}
(\hat{\bm{r}}^s_{k,i}))$ for the surrogate 
representation accordingly.

\par Then, we employ the derivable Gumbel-Softmax trick~\cite{Jang2017} to sample the score vector ${\bf{z}}$ to generate binary-like masks. The top-$\tau J$ highest-scoring dimensions are classified as \textit{upper dimensions} ${{\bm{m}}^{upp}}$, and the remaining ones as \textit{lower dimensions} $\bm{m}^{low} = \mathbf{1} - \bm{m}^{upp}$:
\begin{equation} \label{eq18}
	{\bm{m}^{upp}} = \mathrm{Gumbel-Softmax}(\bm{\omega}(\bm{r}^s_{k,i}), \tau J) \in \mathbb{R}^{1 \times J},
\end{equation}
and $\hat{\bm{m}}^{upp}$, $\hat{\bm{m}}^{low}$ 
are obtained analogously from 
$\hat{\bm{r}}^s_{k,i}$.

\par The masker $\bm{\omega}$ and generator $G_k$ are then trained adversarially. The masker is 
optimized to accurately distinguish upper from lower dimensions via score-based unsupervised 
losses:
\begin{equation} \label{eq21}
    l_{\mathrm{score}}^{\mathrm{upp}} = 
    \sum_{j=1}^J (1-z_j)^2 \cdot m_j^{upp} + 
    \sum_{j=1}^J (1-\hat{z}_j)^2 \cdot 
    \hat{m}_j^{upp},
\end{equation}

\begin{equation} \label{eq22}
    l_{\mathrm{score}}^{\mathrm{low}} = 
    \sum_{j=1}^J (z_j)^2 \cdot m_j^{low} + 
    \sum_{j=1}^J (\hat{z}_j)^2 \cdot 
    \hat{m}_j^{low}.
\end{equation}

\par The generator $G_k$ is trained by jointly minimizing $\mathcal{L}_{Dec}$ and maximizing the scores across all dimensions. This dual objective progressively enriches the information capacity of lower dimensions while preserving 
the task-relevant information in upper dimensions and respecting the independence constraint enforced by $\mathcal{L}_{Dec}$. The overall adversarial objective is:
\begin{equation} \label{eq23}
    \min_{G_k} \sum_{j=1}^J (1-z_j)^2 + 
    \lambda \mathcal{L}_{Dec}, \quad 
    \min_{\bm{\omega}} \mathcal{L}_{score}^{upp} 
    + \mathcal{L}_{score}^{low},
\end{equation}
where $\lambda$ denotes the trade-off factor. The passive party locally alternates between 
training the generator and masker for $M$ 
iterations, upon which the generator acquires 
the capability to extract clean and mutually 
independent causal representations. 
Algorithm~\ref{alg:causshield} in 
Appendix~\ref{append:alg} details \name's full 
procedure.

\section{Convergence Proof}
\label{sec:5}

\par We prove convergence (Theorem~\ref{theo:conver}) under the fixed step size strategy. More detailed proofs of Theorem~\ref{theo:conver} see Appendix~\ref{append:C}.
\par \textbf{Convergence analysis:} Prior assumptions are introduced that are commonly found in FL-related works~\cite{Li2020,Castiglia2022}.
\par The partial derivative for ${\theta _k}$ can be expressed as:
\begin{equation}\label{eq29}
	\begin{split}
		\nabla _k{\cal L}(\Theta)&=\frac{1}{N}\sum_{i=1}^N \nabla_{{\theta_k}}  {l_{ce}}(f_{\text{top}}(f_1(\theta_1;x_{1,i}),\\
		&f_2(\theta_2;x_{2,i}),\ldots,f_K(\theta_K;x_{K,i}),y_i)).
	\end{split}
\end{equation}
\par The set of samples and labels belonging to a mini-batch $B$ is denoted by ${{\bf{X}}^B}$ and ${{\bf{y}}^B}$ respectively. The stochastic partial derivative for the model parameter ${\theta _k}$ is described below:
\begin{equation}\label{eq30}
	\begin{split}
		\nabla _k{{\cal L}_B}(\Theta)&=\frac{1}{B}\sum_{i=1}^B \nabla_{{\theta_k}}  {l_{ce}}(f_{\rm top}(f_1(\theta_1;x_{1,i}),\\
		&f_2(\theta_2;x_{2,i}), \ldots,f_K(\theta_K;x_{K,i}),y_i)).
	\end{split}
\end{equation}
\par In addition, the chain rule shows that:
\begin{equation} \label{eq31}
	{\nabla _k}{{\cal L}_B}(\Theta ) = {\nabla _{{{\bf{R}}_k}}}{{\cal L}_B}(\Theta ) \cdot {\nabla _{{\theta _k}}}{{\bf{R}}_k}.
\end{equation}
\par Let $|| \cdot ||$ be the 2-norm and $|| \cdot |{|_{\cal F}}$ be the Frobenius norm of a matrix.

\begin{assumption}[Bounded hessian~\cite{Li2020}]\label{assump:bounded}
A positive constant ${M_k}$ exists for each passive party $k \in [1,K]$ such that the second partial derivative of ${\theta _k}$ satisfies:
\begin{equation} \label{eq32}
	||\nabla _{{{\bf{R}}_k}}^2{{\cal L}_B}(\Theta )||_{\cal F} \le {M_k}.
\end{equation}
\end{assumption}
\begin{assumption}[Bounded representation gradients~\cite{Castiglia2022}]\label{assump:bounded_grad}
There exist positive constants, denoted as ${\Phi _k}$, ensure that the cut layer gradient remains bounded:
\begin{equation} \label{eq33}
	||{\nabla _{{\theta _k}}}{{\bf{R}}_k}||_{\cal F} \le {\Phi _k}.
\end{equation}
\end{assumption}
\begin{definition}[Causal representation error]
Define ${\psi _{k,i}}$ be the causal representation error on the tensor ${x_{k,i}}$:
\begin{equation} \label{eq34}
	\text{${\psi _{k,i}} = {G_k}({\hat x_{k,i}}) - {f_k}({\theta _k};{x_{k,i}})$}.
\end{equation}
\end{definition}
\par Let ${\bm{\psi}}_k^t = [{({\psi _{k,1}})^T},{({\psi _{k,2}})^T},...,{({\psi _{k,B}})^T}]$ be the ${\mathbb{R}^{J \times B}}$ matrix in mini-batch $B$. We define the expected squared message error at communication round $t$ as $\Psi _k^t: = ||{\bm{\psi}}_k^t||_{\cal F}^2$.

\begin{theorem}\label{theo:bounded}
Under Assumptions~\ref{assump:bounded}-\ref{assump:bounded_grad}, the norm of the difference between the partial derivatives of causal and raw representations is bounded as:
\begin{equation} \label{eq35}
	\mathbb{E}||{\nabla _{_k}}{\cal L}_B^t({\bf{\hat R}}_k^s) - {\nabla _{_k}}{\cal L}_B^t({{\bf{R}}_k})||^2 \le M_k^2\Phi _k^2\sum\limits_{i = 0,i \ne k}^K {\Psi _i^t} .
\end{equation}
\end{theorem}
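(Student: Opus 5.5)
This bound follows the compressed-message analysis of Castiglia et al.~\cite{Castiglia2022}. The idea is to treat the causal representations as a perturbed version of the raw cut-layer messages, with the perturbation given by the causal representation error $\bm{\psi}^t$.

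\textbf{Step 1: chain-rule factorisation.} By the chain rule (\ref{eq31}), both gradients share the Jacobian factor $\nabla_{\theta_k}\mathbf{R}_k$. The left-hand side is therefore the squared norm of
\[
\bigl(\nabla_{\mathbf{R}_k}\mathcal{L}_B^t(\hat{\mathbf{R}}^s) - \nabla_{\mathbf{R}_k}\mathcal{L}_B^t(\mathbf{R})\bigr)\cdot\nabla_{\theta_k}\mathbf{R}_k .
\]
Here $\hat{\mathbf{R}}^s$ is the concatenation of all parties' transmitted causal representations, and $\mathbf{R}$ is the concatenation of the raw ones. Submultiplicativity of the Frobenius norm and Assumption~\ref{assump:bounded_grad} then pull out a factor $\Phi_k^2$.

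\textbf{Step 2: Lipschitz bound on the representation gradient.} Write the gradient difference as an integral of the Hessian along the segment between $\mathbf{R}$ and $\hat{\mathbf{R}}^s$ (mean value theorem). Assumption~\ref{assump:bounded} bounds the Hessian by $M_k$, which makes $\nabla_{\mathbf{R}_k}\mathcal{L}_B$ $M_k$-Lipschitz in the concatenated representation. This gives
\[
\|\nabla_{\mathbf{R}_k}\mathcal{L}_B(\hat{\mathbf{R}}^s)-\nabla_{\mathbf{R}_k}\mathcal{L}_B(\mathbf{R})\|^2 \le M_k^2\,\|\hat{\mathbf{R}}^s-\mathbf{R}\|_{\mathcal F}^2 .
\]

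\textbf{Step 3: identify the perturbation with $\bm{\psi}$.} The concatenated difference splits blockwise over parties. Block $i$ is exactly $\bm{\psi}_i^t$ by Definition~(\ref{eq34}), since $G_i(\hat x_{i,\cdot})-f_i(\theta_i;x_{i,\cdot})$ is the deviation of party $i$'s transmitted message. Hence $\|\hat{\mathbf{R}}^s-\mathbf{R}\|_{\mathcal F}^2=\sum_i\Psi_i^t$. Taking expectation over the mini-batch finishes the argument.

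\textbf{Main obstacle: excluding $i=k$.} The stated sum omits $i=k$. To justify this, I would argue as in~\cite{Castiglia2022}: party $k$ computes its own partial derivative using its exact local representation, so its own block contributes nothing. Concretely, I would evaluate both gradients with party $k$'s block fixed at $\mathbf{R}_k$. Only the other parties' blocks then differ, and the mean-value path leaves block $k$ unchanged.

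If this identification is not accepted, the fallback is to include the $i=k$ term. That still gives a valid, slightly weaker bound.

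The step I expect to be most delicate is justifying that the Hessian bound in Assumption~\ref{assump:bounded} controls the mixed blocks $\partial^2\mathcal{L}/\partial\mathbf{R}_k\partial\mathbf{R}_i$. The assumption literally states a bound only on $\nabla^2_{\mathbf{R}_k}$. I would read it, as is standard, as bounding the relevant rows of the full Hessian with respect to the concatenated representation.
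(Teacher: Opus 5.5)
Your proposal is correct and follows essentially the same route as the paper. The paper sets $\nabla_k\mathcal{L}_B^t(\hat{\mathbf{R}}_k^s)=\nabla_k\mathcal{L}_B^t(\mathbf{R}_k+\mathbf{E}_k^t)$, factors out $\nabla_{\theta_k}\mathbf{R}_k$ by the chain rule, and Taylor-expands the representation gradient so that the difference becomes $\nabla_{\theta_k}\mathbf{R}_k\cdot R_{\rm top}^t(\mathbf{R}_k+\mathbf{E}_k^t)$. It bounds this remainder by $M_k^2\Phi_k^2$ times the error using the lemma imported from~\cite{Castiglia2022}, which is your Steps 1--2; your mean-value argument gives the squared $\|\mathbf{E}_k^t\|_{\mathcal F}^2$, which is what the final identification needs, whereas the paper's lemma writes the unsquared norm. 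The paper then identifies $\|\mathbf{E}_k^t\|_{\mathcal F}^2=\sum_{j\neq k}\Psi_j^t$ exactly as in your Step 3. Your two caveats (dropping $i=k$ via the convention that party $k$ uses its own unperturbed block, and reading the Hessian assumption as covering the mixed blocks) are assumptions the paper also makes implicitly by inheriting them from~\cite{Castiglia2022} without comment.
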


\par Further, some assumptions about local functions and stochastic gradient descent (SGD) algorithm are imported~\cite{Li2020}.
\begin{assumption}[Smoothness~\cite{Li2020}]\label{assump:smooth}
All local functions are L-Lipschitz smooth, such that for all $\bf{v}$ and $\bf{w}$, the objective function satisfies:
\begin{equation} \label{eq36}
	||\nabla {\cal L}({\bf{v}}) - \nabla {\cal L}({\bf{w}})|| \le L||{\bf{v}} - {\bf{w}}||,
\end{equation}
\begin{equation} \label{eq37}
	||{\nabla _k}{{\cal L}_B}({\bf{v}}) - {\nabla _k}{{\cal L}_B}({\bf{w}})|| \le {L_k}||{\bf{v}} - {\bf{w}}||,
\end{equation}
where ${\bf{v}} < \infty $ and ${\bf{w}} < \infty $.
\end{assumption}

\begin{assumption}[Unbiased gradients~\cite{Li2020}]\label{assump:unbiased}
 For $k \in [1,K]$ in batch $B$, the stochastic partial derivatives are unbiased:
\begin{equation} \label{eq38}
	\mathbb{E}{_B}[{\nabla _k}{{\cal L}_B}(\Theta )] = {\nabla _k}{\cal L}(\Theta ).
\end{equation}
\end{assumption}
\begin{assumption}[Bounded variance~\cite{Li2020}]\label{assump:varia}
The stochastic partial derivatives for ${\theta _k}$, are bounded as:
\begin{equation} \label{eq39}
	\mathbb{E}{_B}||{\nabla _k}{\cal L}(\Theta ) - {\nabla _k}{{\cal L}_B}(\Theta )||^2 \le \frac{{\pi _k^2}}{B}.
\end{equation}
where ${\pi _k} < \infty $.
\end{assumption}
\par Under Theorem~\ref{theo:bounded} and Assumptions~\ref{assump:bounded}-\ref{assump:smooth}, we prove convergence by limiting the boundary of the average squared gradient over $T$ communication rounds with a fixed step size. When the \name method is implemented, it is denoted as ${\cal L}(\hat \Theta )$, and when ${\cal L}(\Theta )$ represents the plain VFL.

\begin{theorem}[Convergence results]\label{theo:conver}
 If the learning rate $\eta $ is fixed in different communication rounds and satisfies $\eta  \le \frac{1}{{16T\max \{ L,{{\max }_k},{L_k}\} }}$, then the gradients of the \name algorithm is bounded over the global round $T$:
\begin{equation} \label{eq40}
	\begin{aligned}
		&\mathbb{E}{^T}\left[ {{{\left\| {\nabla {\cal L}({\Theta ^t})} \right\|}^2}} \right] \\
		&\le \frac{{4\left[ {{\cal L}({\Theta ^0}) - \left[ {{\cal L}({\Theta ^T})} \right]} \right]}}{{\eta T}}\quad + 6\eta TL\sum\limits_{k = 1}^K {\frac{{\pi _k^2}}{B}} \\
		&\quad + 92T\sum\limits_{k = 1}^K {\sum\limits_{t = 0}^{T - 1} {M_k^2\Phi _k^2} } \sum\limits_{j = 1,j \ne k}^K {\Psi _j^t} .
	\end{aligned}
\end{equation}
\end{theorem}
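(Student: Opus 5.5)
We adapt the standard compressed/erroneous-message VFL convergence argument in the style of \cite{Castiglia2022}. The starting point is the $L$-smoothness of $\mathcal{L}$ from Assumption~\ref{assump:smooth}, written as a descent lemma for one round of the \name update. Each passive party $k$ updates
\[
\theta_k^{t+1}=\theta_k^t-\eta\,\nabla_k\mathcal{L}_B^t(\hat{\mathbf R}^s_k),
\]
that is, with the stochastic partial gradient computed on the transmitted causal representations rather than on the raw ones. This gives
\[
\mathcal{L}(\Theta^{t+1})\le\mathcal{L}(\Theta^t)-\eta\langle\nabla\mathcal{L}(\Theta^t),\hat g^t\rangle+\tfrac{L\eta^2}{2}\|\hat g^t\|^2 ,
\]
where $\hat g^t$ stacks the blocks $\nabla_k\mathcal{L}_B^t(\hat{\mathbf R}^s_k)$.

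Next we split each block as $\hat g_k^t=\nabla_k\mathcal{L}(\Theta^t)+(\nabla_k\mathcal{L}_B^t(\mathbf R_k)-\nabla_k\mathcal{L}(\Theta^t))+(\nabla_k\mathcal{L}_B^t(\hat{\mathbf R}^s_k)-\nabla_k\mathcal{L}_B^t(\mathbf R_k))$, i.e.\ true gradient, sampling noise, and causal-representation error.

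\begin{itemize}
\item In the inner product, the sampling-noise term vanishes in expectation by Assumption~\ref{assump:unbiased}.
\item The error term is handled with Young's inequality $\langle a,b\rangle\ge -\tfrac12\|a\|^2-\tfrac12\|b\|^2$. This leaves $-\tfrac{\eta}{2}\|\nabla\mathcal L\|^2+\tfrac{\eta}{2}\sum_k\mathbb E\|\text{err}_k\|^2$.
\item In the quadratic term, $\|a+b+c\|^2\le 3(\|a\|^2+\|b\|^2+\|c\|^2)$ lets the variance piece be bounded by $\pi_k^2/B$ via Assumption~\ref{assump:varia}.
\item The error pieces are bounded by Theorem~\ref{theo:bounded}, giving $M_k^2\Phi_k^2\sum_{j\ne k}\Psi_j^t$.
\item The leftover $3L\eta^2\|\nabla\mathcal L\|^2$ is absorbed into the negative term using the step-size condition $\eta\le 1/(16T\max\{L,\max_kL_k\})$, keeping a coefficient of at least $\eta/4$.
\end{itemize}

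We then telescope over $t=0,\dots,T-1$ and divide by $\eta T/4$. This produces the three terms of the theorem: $4[\mathcal L(\Theta^0)-\mathcal L(\Theta^T)]/(\eta T)$, a variance term of order $\eta L\sum_k\pi_k^2/B$, and an error term proportional to $\sum_k\sum_t M_k^2\Phi_k^2\sum_{j\ne k}\Psi_j^t$. The extra factor $T$ in the second and third terms, and the constants $6$ and $92$, come from loose bookkeeping. One source is bounding the average over rounds by the sum over rounds. The other is the per-block smoothness constants $L_k$, which enter when parties update in parallel on stale or perturbed embeddings (the $16T$ in the step size permits this). We would match the stated constants rather than optimize them.

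The main obstacle is the coupling between blocks. Party $k$'s gradient depends on the other parties' representations through $f_{\rm top}$, so its error involves $\Psi_j^t$ for $j\ne k$. We have to chain Assumptions~\ref{assump:bounded}--\ref{assump:bounded_grad} through Theorem~\ref{theo:bounded} consistently, with the $L_k$-smoothness of Assumption~\ref{assump:smooth} controlling the discrepancy between the block-wise gradient and the full gradient, and without losing control of the constants.
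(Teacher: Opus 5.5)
Your argument is correct, but it takes a different and simpler route than the paper.

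\textbf{The paper's route.} It follows the windowed analysis of \cite{Castiglia2022}. It applies $L$-smoothness between an anchor iterate $\Theta^{t_0}$ and a later iterate $\Theta^{t_c}$, writing $\Theta^{t_c}-\Theta^{t_0}=-\eta\sum_{t=t_0}^{t_c}\nabla\mathcal{L}_B(\hat\Theta^t)$. It then adds and subtracts $\nabla\mathcal{L}_B(\Theta^{t_0})$ and controls the drift $\sum_t\mathbb{E}\|\nabla\mathcal{L}_B(\hat\Theta^t)-\nabla\mathcal{L}_B(\Theta^{t_0})\|^2$ through Lemma~\ref{lemma2}. That lemma is where the block constants $L_k$, the $T^3$ factors and the $T$ in the step-size condition come from. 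The $T$ multiplying the second and third terms, and the constants $6$ and $92$, are residues of that bookkeeping.

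\textbf{Your route.} You run a one-step biased-SGD argument at $\Theta^t$, in which Theorem~\ref{theo:bounded} enters directly as a bound on the bias. One small correction: the leftover quadratic term is $\tfrac{3L\eta^2}{2}\|\nabla\mathcal{L}\|^2$, not $3L\eta^2\|\nabla\mathcal{L}\|^2$, so $L\eta\le 1/6$ already keeps the coefficient $\eta/4$. Under the stated step size, the argument yields
\[
\begin{aligned}
&\frac{1}{T}\sum_{t=0}^{T-1}\mathbb{E}\|\nabla\mathcal{L}(\Theta^t)\|^2\\
&\le\frac{4[\mathcal{L}(\Theta^0)-\mathbb{E}\mathcal{L}(\Theta^T)]}{\eta T}+6\eta L\sum_{k=1}^K\frac{\pi_k^2}{B}\\
&\quad+\frac{19}{8T}\sum_{t=0}^{T-1}\sum_{k=1}^K M_k^2\Phi_k^2\sum_{j\neq k}\Psi_j^t,
\end{aligned}
\]
which is dominated by the stated right-hand side simply because $T\ge 1$.

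So do not attribute the extra $T$ and the constants to staleness or to $L_k$; in your framework they are a plain weakening. The ``main obstacle'' you describe also does not arise. The cross-party coupling is already packaged into the $\sum_{j\ne k}\Psi_j^t$ of Theorem~\ref{theo:bounded}. Every block gradient is evaluated at the same $\Theta^t$, so there is no block-versus-full-gradient discrepancy for $L_k$ to control.

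\textbf{What each route buys.} Yours is shorter and needs neither Lemma~\ref{lemma2}, nor the $L_k$, nor a $T$-dependent step size. It also keeps the variance term at $6\eta L$ and the error term averaged over rounds, without the extra factor $T$. That is what the $O(1/\sqrt{T}+\Psi)$ remark with $\eta=1/\sqrt{T}$ actually needs. However, it presumes one synchronous update per round, with all partial gradients taken at the current iterate. The paper's windowed argument is designed to tolerate several local updates on stale or perturbed embeddings between synchronizations. It pays for this with the $T$-dependent step size and much looser constants.
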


\begin{remark}
\par The first term in the inequality is based on the difference between the model initialization and the final trained model. The second term involves the variance of stochastic partial derivatives and Lipschitz constants. The third term is associated with the bounded causal representation error mentioned in Theorem~\ref{theo:bounded}.
\end{remark}
\begin{remark}
\par When $\eta  = \frac{1}{{\sqrt T }}$ and $\Psi  = \frac{1}{T}\sum\limits_{t = 0}^{T - 1} {\sum\limits_{k = 1}^K {\Psi _k^t} } $, the optimal convergence rate is obtained:
\begin{equation} \label{eq41}
	\mathbb{E}\left[ {||\nabla {\cal L}({\Theta ^T})|{|^2}} \right] = O\left(\frac{1}{{\sqrt{T}}} + \Psi \right).
\end{equation}
\end{remark}
\par We have now proved that the \name can withstand the bounded error and obtain stable asymptotic convergence when this condition is met.

\section{Experiments}
\label{sec:6}

\par We evaluate \name against seven SOTAs across defense effectiveness, model utility, and computational efficiency, demonstrating its superiority.

\subsection{Experimental Setup}
\par \textbf{Datasets and model settings:} Following the experimental settings of representative 
SOTAs~\cite{Xu2025, Yao2025, Erdogan2022}, we conduct our experiments primarily on five diverse datasets: MNIST, EMNIST, CIFAR-10, CIFAR-100, and ImageNet. The models for MNIST and EMNIST consist of 11 layers; those for CIFAR-10 and CIFAR-100 employ a ResNet-18. Notably, for the more complex ImageNet dataset, we utilize a deeper ResNet-34 model. We adopt the spatial splitting strategy described in \cite{Yao2025} to partition the feature dimensions held by each participant.
\par \textbf{Implementation and configuration:} Without loss of generality, we consider a configuration with one active party and two passive parties. We utilize SGD optimizer with a learning rate of 1e-4 and a momentum of 0.9. For all experiments, we set the number of surrogate iterations $M$ based on the complexity of the dataset. Concretely, we use $M = 20$ for MNIST and EMNIST, and $M = 30$ for CIFAR-10, CIFAR-100, and ImageNet. Moreover, as the model approaches convergence, it is feasible to gradually reduce the value of $M$ to lower computational cost and mitigate the risk of overfitting to synthetic surrogates. Additionally, we fix the masking threshold $\tau$ at 0.5, meaning that the top-50\% most causally informative dimensions are preserved during causal representation construction. 

\par \textbf{Compared methods:} We compare the performance and privacy of our \name against SOTAs. These comparison methods and their corresponding of privacy parameters are outlined as follows: (1) DP-L~\cite{Geyer2017} ($\varepsilon$: the privacy budget in DP based on Laplace noise); (2) Prune~\cite{Zhu2019} ($k$: the pruning rate of the representation elements); (3) Soteria~\cite{Sun2021} ($s$: the perturbation degree of the representation element that is most beneficial to the attacker); (4) NoPeek~\cite{Vepakomma2020} ($a$: the coefficient of information leakage reduction term added to the objective loss function); (5) DISCO~\cite{Singh2021} ($\rho$: the hyper-parameter to trade-off between accuracy and privacy); (6) ResSFL~\cite{Li2022c} ($\delta$: control the weight of the attacker-aware training term in the loss function); (7) InvL-DNP and InvL-ENP~\cite{Xu2025} ($n$: control the strength of the adaptively noise).


\par \textbf{Attack models:} Unsplit~\cite{Erdogan2022} minimizes the distance between the attack model and raw model by optimizing model parameters and shadow input through the coordinate gradient descent. URVFL~\cite{Yao2025} leverages a discriminator with an auxiliary classifier to generate malicious gradients that are indistinguishable from those in VFL training, while simultaneously utilizing labels to enhance reconstruction performance.
\par \textbf{Evaluation metrics:} To evaluate the defense effect of the \name, we employ peak signal-to-noise ratio (PSNR), mean squared error (MSE), structural similarity index measure (SSIM), learned perceptual image patch similarity (LPIPS) and attack success rate (ASR) as our metrics. To assess the impact of the defenses on the model utility of VFL, we show the accuracy (ACC) of the global model on the validation dataset over multiple communication rounds.

\subsection{Privacy Evaluations}

\noindent\textbf{Defense against Unsplit.} The defense effectiveness of \name on the MNIST and EMNIST datasets, as depicted in Fig.~\ref{Fig.6}. It is evident that without any defense mechanisms, Unsplit attackers can accurately replicate the contours of the digits. In contrast, the \name effectively prevents attackers from reconstructing handwritten digits and exhibits the highest MSE compared to other methods. Meanwhile, pruning-based approaches (i.e., Prune and Soteria) display unsatisfactory defense effectiveness, even when the pruning rate reaches 95\%. This finding indicates that even optimized pruning methods are vulnerable to exploitation when attackers possess certain prior knowledge. 

\begin{figure}[htbp]
	\centering
	\subfloat[MNIST]{\includegraphics[width=2.6in]{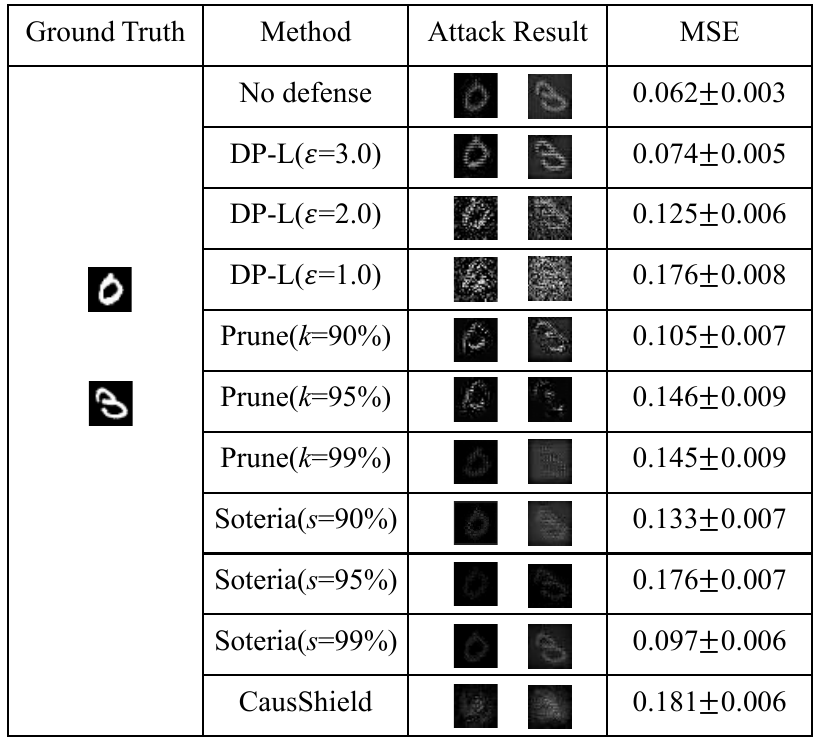}\label{Fig.6a}}
	\\
	\subfloat[EMNIST]{\includegraphics[width=2.7in]{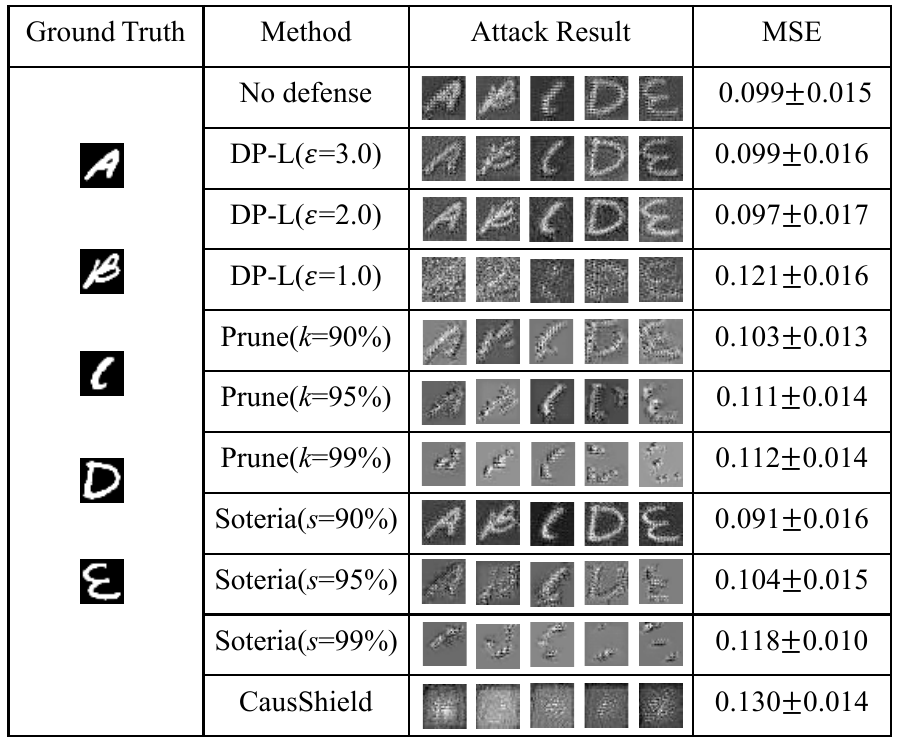}\label{Fig.6b}}
	\caption{Defense effects against Unsplit on MNIST/EMNIST.}
	\label{Fig.6}
\end{figure}

\begin{figure}[htb]
	\centering
	\includegraphics[width=2.4in]{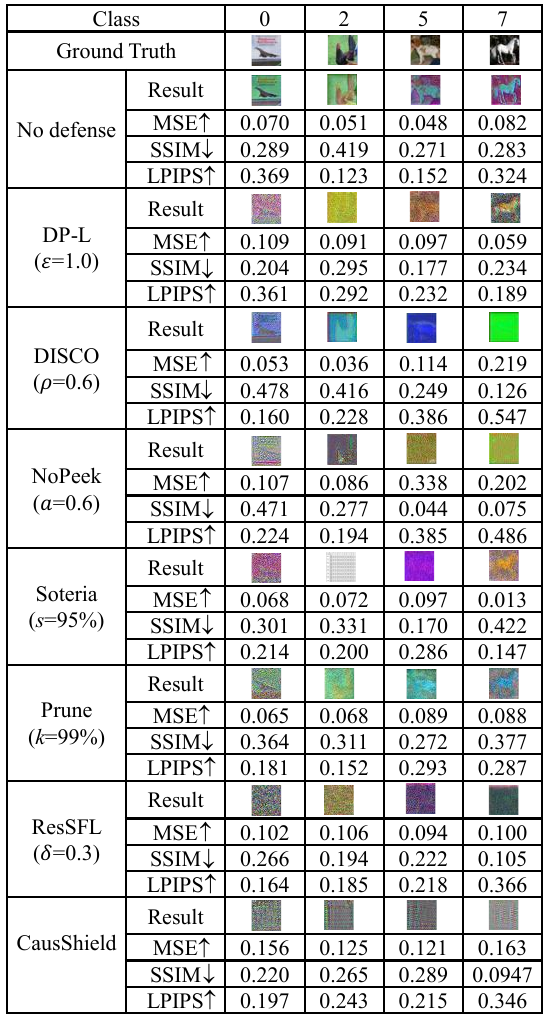}
	\caption{Defense effects against Unsplit on CIFAR10.}
	\label{Fig.7}
\end{figure}

\begin{figure}[htbp]
	\centering
	\includegraphics[width=2.6in]{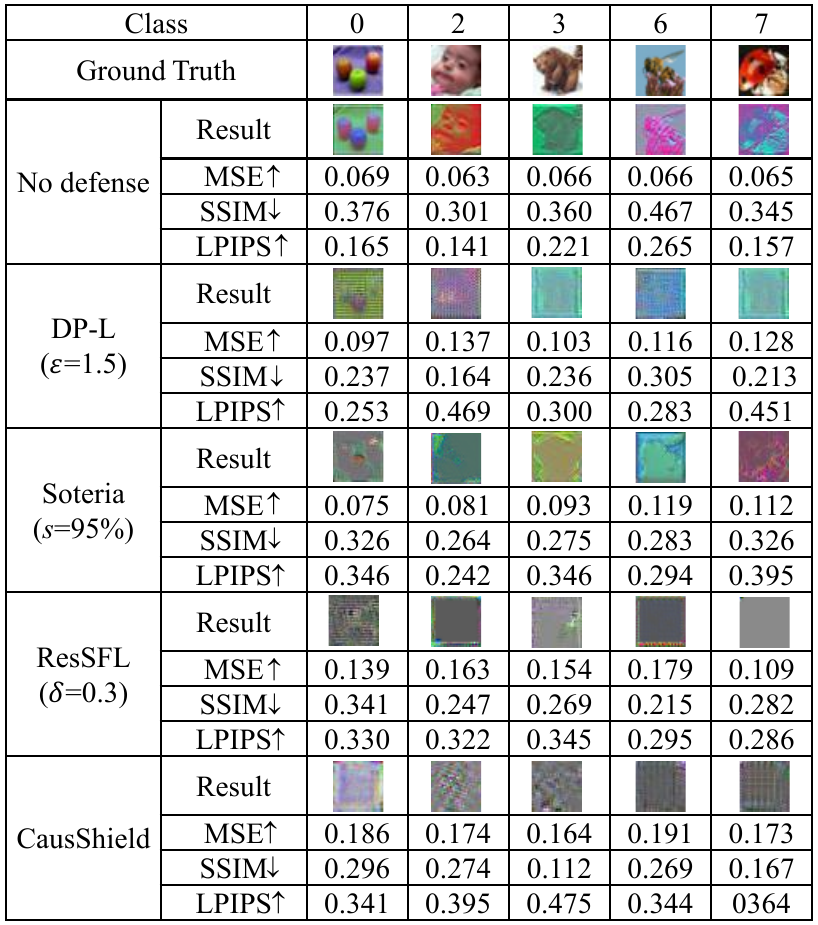}
	\caption{Defense effects against Unsplit on CIFAR-100.}
	\label{Fig.8}
\end{figure}

\begin{figure}[htb]
	\centering
	\includegraphics[width=2.6in]{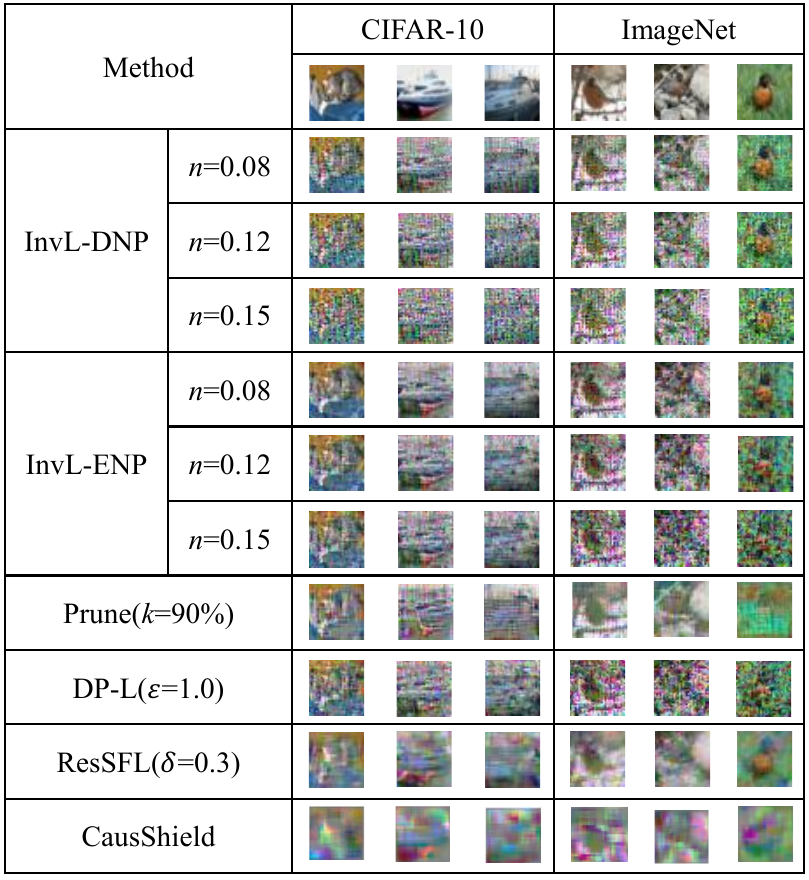}
	\caption{Defense effects against URVFL on CIFAR-10/ImageNet.}
	\label{Fig.imagenet}
\end{figure}

\begin{figure}[htb]
	\centering
	\includegraphics[width=2.8in]{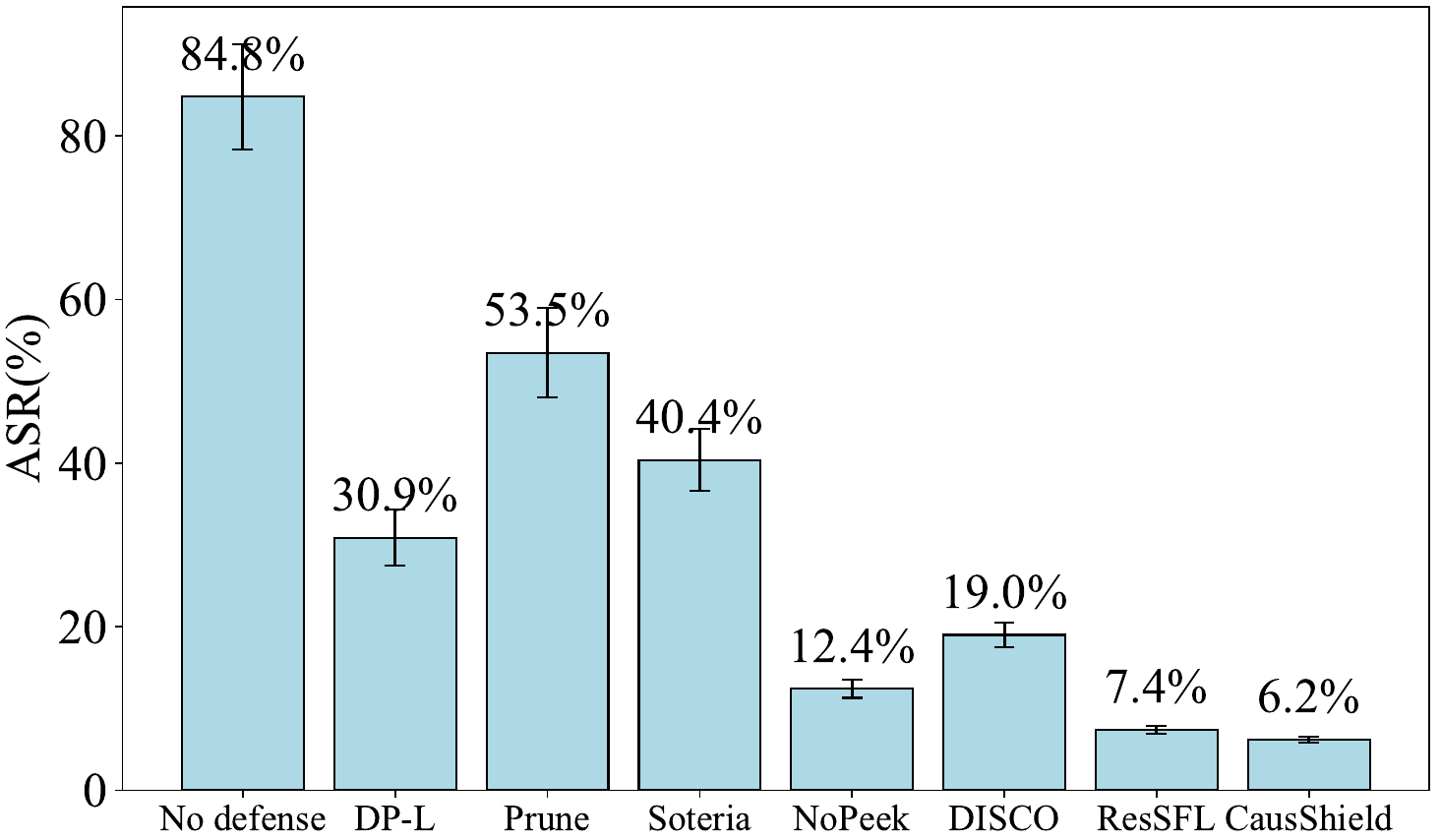}
	\caption{Comparison of the batch attack-defense against Unsplit.}
	\label{Fig.9}
\end{figure}

\begin{table*}[]
	\centering
	\caption{COMPARISON OF DEFENSE METHODS AGAINST URVFL~\cite{Yao2025} ATTACKS.}
	\label{table.new}
	\scalebox{0.85}{
	\begin{tabular}{c|cccc|cccc}
		\hline
		\multirow{2}{*}{\textbf{Method}} & \multicolumn{4}{c|}{\textbf{CIFAR-10}} & \multicolumn{4}{c}{\textbf{ImageNet}} \\ \cline{2-9} 
		& \multicolumn{1}{c|}{MSE$\,\uparrow$} & \multicolumn{1}{c|}{PSNR$\,\downarrow$} & \multicolumn{1}{c|}{SSIM$\,\downarrow$} & ACC$\,\uparrow$ & \multicolumn{1}{c|}{MSE$\,\uparrow$} & \multicolumn{1}{c|}{PSNR$\,\downarrow$} & \multicolumn{1}{c|}{SSIM$\,\downarrow$} & ACC$\,\uparrow$ \\ \hline
		InvL-DNP ($n=0.08$) & \multicolumn{1}{c|}{0.029±0.014} & \multicolumn{1}{c|}{15.925±2.036} & \multicolumn{1}{c|}{0.534±0.030} & 70.81\% & \multicolumn{1}{c|}{0.014±0.004} & \multicolumn{1}{c|}{18.559±1.228} & \multicolumn{1}{c|}{0.606±0.049} & 59.2\% \\ \hline
		InvL-DNP ($n=0.12$) & \multicolumn{1}{c|}{0.042±0.016} & \multicolumn{1}{c|}{14.133±1.674} & \multicolumn{1}{c|}{0.416±0.012} & 66.70\% & \multicolumn{1}{c|}{0.024±0.005} & \multicolumn{1}{c|}{16.349±0.888} & \multicolumn{1}{c|}{0.463±0.061} & 55.7\% \\ \hline
		InvL-DNP ($n=0.15$) & \multicolumn{1}{c|}{0.053±0.019} & \multicolumn{1}{c|}{13.057±1.640} & \multicolumn{1}{c|}{0.304±0.039} & 58.32\% & \multicolumn{1}{c|}{0.030±0.004} & \multicolumn{1}{c|}{15.231±0.646} & \multicolumn{1}{c|}{0.397±0.076} & 49.0\% \\ \hline
		InvL-ENP ($n=0.08$) & \multicolumn{1}{c|}{0.021±0.014} & \multicolumn{1}{c|}{17.775±2.885} & \multicolumn{1}{c|}{0.699±0.063} & 64.166\% & \multicolumn{1}{c|}{0.013±0.004} & \multicolumn{1}{c|}{19.013±1.374} & \multicolumn{1}{c|}{0.644±0.055} & 60.1\% \\ \hline
		InvL-ENP ($n=0.12$) & \multicolumn{1}{c|}{0.028±0.015} & \multicolumn{1}{c|}{16.255±2.405} & \multicolumn{1}{c|}{0.612±0.039} & 61.66\% & \multicolumn{1}{c|}{0.025±0.006} & \multicolumn{1}{c|}{16.177±1.159} & \multicolumn{1}{c|}{0.462±0.075} & 54.9\% \\ \hline
		InvL-ENP ($n=0.15$) & \multicolumn{1}{c|}{0.033±0.018} & \multicolumn{1}{c|}{15.423±2.319} & \multicolumn{1}{c|}{0.553±0.036} & 50.0\% & \multicolumn{1}{c|}{0.031±0.008} & \multicolumn{1}{c|}{15.248±1.163} & \multicolumn{1}{c|}{0.381±0.068} & 47.1\% \\ \hline
		\rowcolor{mixbg}\name & \multicolumn{1}{c|}{0.069±0.012} & \multicolumn{1}{c|}{11.114±1.411} & \multicolumn{1}{c|}{0.415±0.046} & 69.2\% & \multicolumn{1}{c|}{0.070±0.033} & \multicolumn{1}{c|}{12.622±2.432} & \multicolumn{1}{c|}{0.410±0.059} & 58.5\% \\ \hline
	\end{tabular}}
\end{table*}

\par Fig.~\ref{Fig.7} and Fig.~\ref{Fig.8} present the results of attack-defense experiments against Unsplit conducted on the CIFAR-10 and CIFAR-100 datasets, respectively. These evaluations include several advanced SOTAs. To account for potential color distortion in representation-based black-box Unsplit attacks, we incorporate SSIM and LPIPS as evaluation metrics alongside MSE. Notably, a defense against the Unsplit attacker is deemed successful when the MSE exceeds 0.1. 

\noindent\textbf{Defense against URVFL.} Table \ref{table.new} and Fig.~\ref{Fig.imagenet} illustrate the defensive performance of the InvL-DNP and InvL-ENP, alongside our \name, against advanced URVFL attackers. The evaluation spans various privacy parameter configurations on the CIFAR-10 and ImageNet datasets, where a defense is deemed successful if the reconstruction MSE exceeds 0.025. Additionally, in Figs.~\ref{Fig.7}-\ref{Fig.imagenet} and Table~\ref{table.new}, the cut layer in the model is the sixth layer.
 
\par Both the ResSFL and \name show superior defensive capability across all evaluation metrics compared to others. These two methods aim to minimize the inclusion of private information in transmitted representations. Privacy protection in ResSFL is attained through its attacker-aware training mechanism. Our \name, acting as a causal desensitization mechanism, discards task-irrelevant components and uploads only causal representations containing ample classification information. The baseline VFL framework without any defense achieves accuracies of 71.8\% and 63.3\% on CIFAR-10 and ImageNet, respectively. As shown in Table~\ref{table.new}, both InvL-DNP and InvL-ENP fail to provide adequate privacy protection when the adaptive noise scale is small (e.g., $n=0.08$). Although increasing the noise scale successfully defends URVFL attacks, it simultaneously imposes a significant penalty on the model's primary task utility. Notably, \name surpasses ResSFL in both defense effects (see Fig. \ref{Fig.7}-\ref{Fig.8}) and model utility (see Figs.~\ref{Fig.11}-\ref{Fig.12} and TABLE~\ref{tab:emnist-cifar100-iid}).

\noindent\textbf{Batch Attack Evaluation.} In turn, batch attack-defense experiments are conducted on the CIFAR-10 dataset, as depicted in Fig.~\ref{Fig.9}. The privacy parameter settings in batch attack-defense experiments follow those used in Fig.~\ref{Fig.7}. Batch experiments mitigate the influence of randomness and privacy parameters on evaluations, thereby yielding more accurate and comprehensive results. For each class in CIFAR-10, 1000 images are randomly selected to observe the ASR of the Unsplit. Fig.~\ref{Fig.9} illustrates the average ASR across 10 sets of 1000 images. Our \name achieves the lowest ASR. This suggests that the \name mechanism is effective against large-scale attacks and offers increased dependability and practicality.

\noindent\textbf{Robustness across Cut-Layer Depths.} Subsequently, we conduct extensive experiments on CIFAR-10 to determine whether these methods' privacy parameters required vary with changes in the cut-layer depth. As depicted in Fig.~\ref{Fig.10}, the quality of reconstructions gradually decreases as the cut-layer deepens, even without any defense. At a cut-layer depth of 12, the attacker cannot extract sample details from representations. This is attributed to the higher abstraction of high-level features in CNNs, which contain minimal low-level information such as edges, textures, and colors. Moreover, as the quality of reconstruction decreases, the required privacy parameters for methods such as DP-L, DISCO, and NoPeek decrease. Instead, our \name does not necessitate extensive experimentation for parameter tuning. This characteristic reduces time and resource consumption and aligns \name more closely with practical application. 


\begin{figure}[htb]
	\centering
	\includegraphics[width=2.5in]{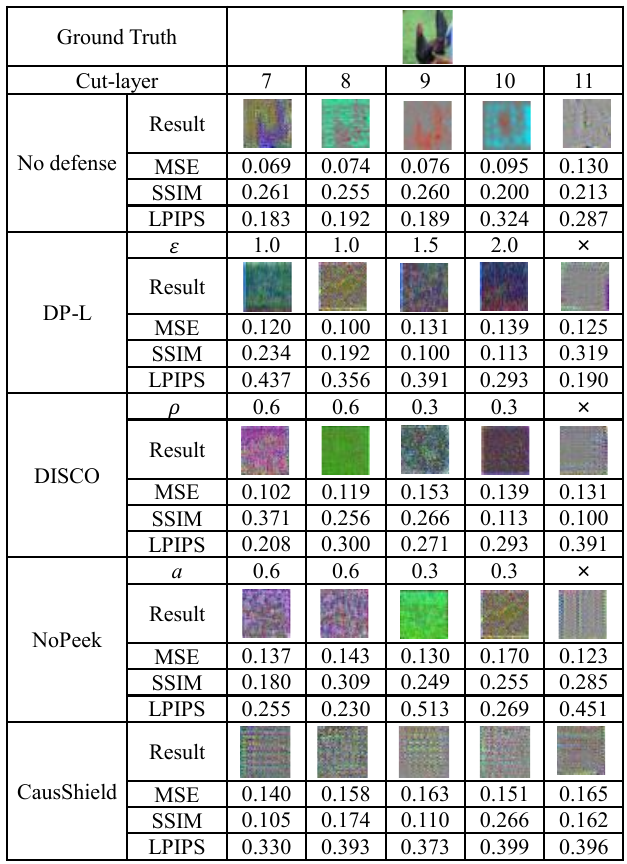}
	\caption{Defense effect on different cut-layers on CIFAR-10.}
	\label{Fig.10}
\end{figure}

\subsection{Model Utility Evaluations}


\noindent\textbf{Accuracy and Convergence.} We evaluate the model accuracy and loss across various approaches on the MNIST dataset (see Fig.~\ref{Fig.11}). The privacy parameters for each method are as follows: DP-L ($\varepsilon$=1.0), Prune ($k$=99\%), Soteria ($s$=95\%). It is evident that the accuracy and loss curves of the proposed \name closely align with those of vanilla VFL as the communication rounds increase, with negligible utility degradation. The accuracy of \name remains 2\% higher than that of the Soteria. Furthermore, the stability and convergence rate of our \name method are comparable to those of vanilla VFL. In contrast, other comparison approaches exhibit instability and slower convergence speeds.
\begin{figure}[htb]
	\centering
	\subfloat[ACC]{\includegraphics[width=1.7in]{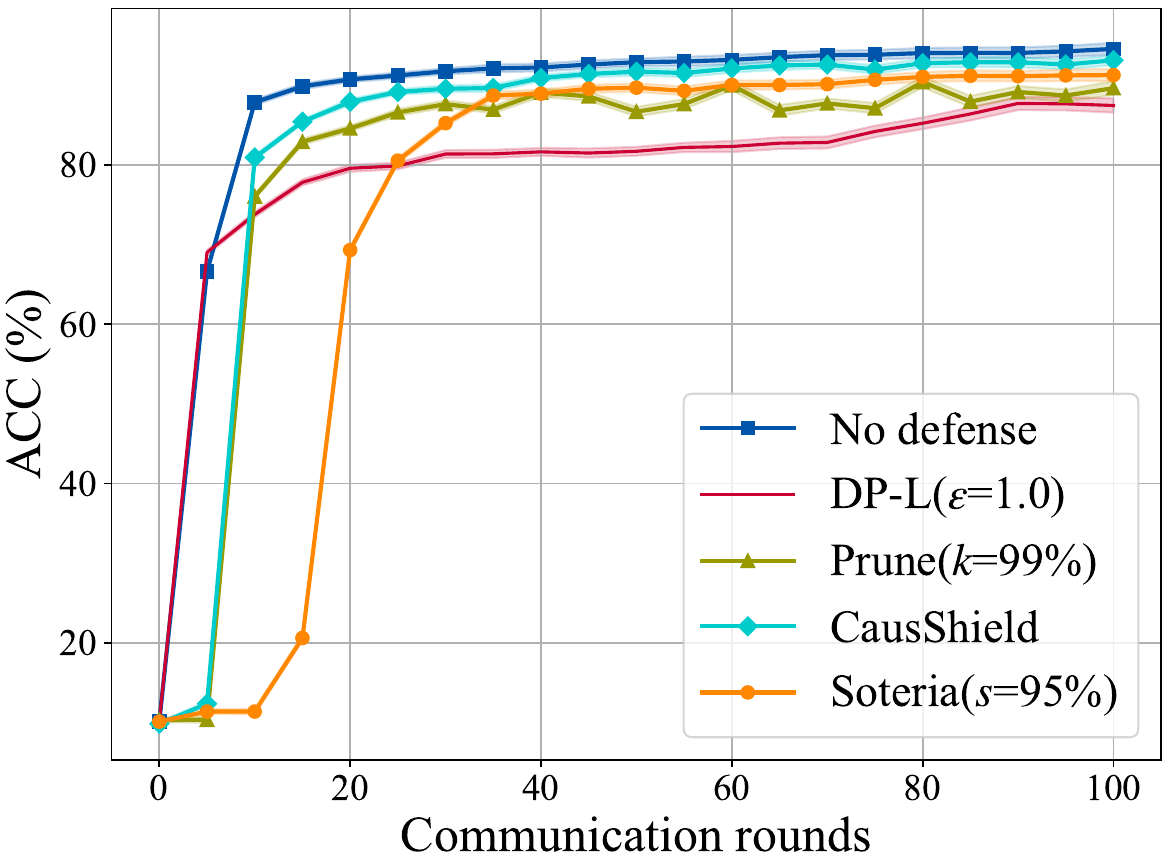}\label{Fig.11a}}
	\subfloat[Loss]{\includegraphics[width=1.7in]{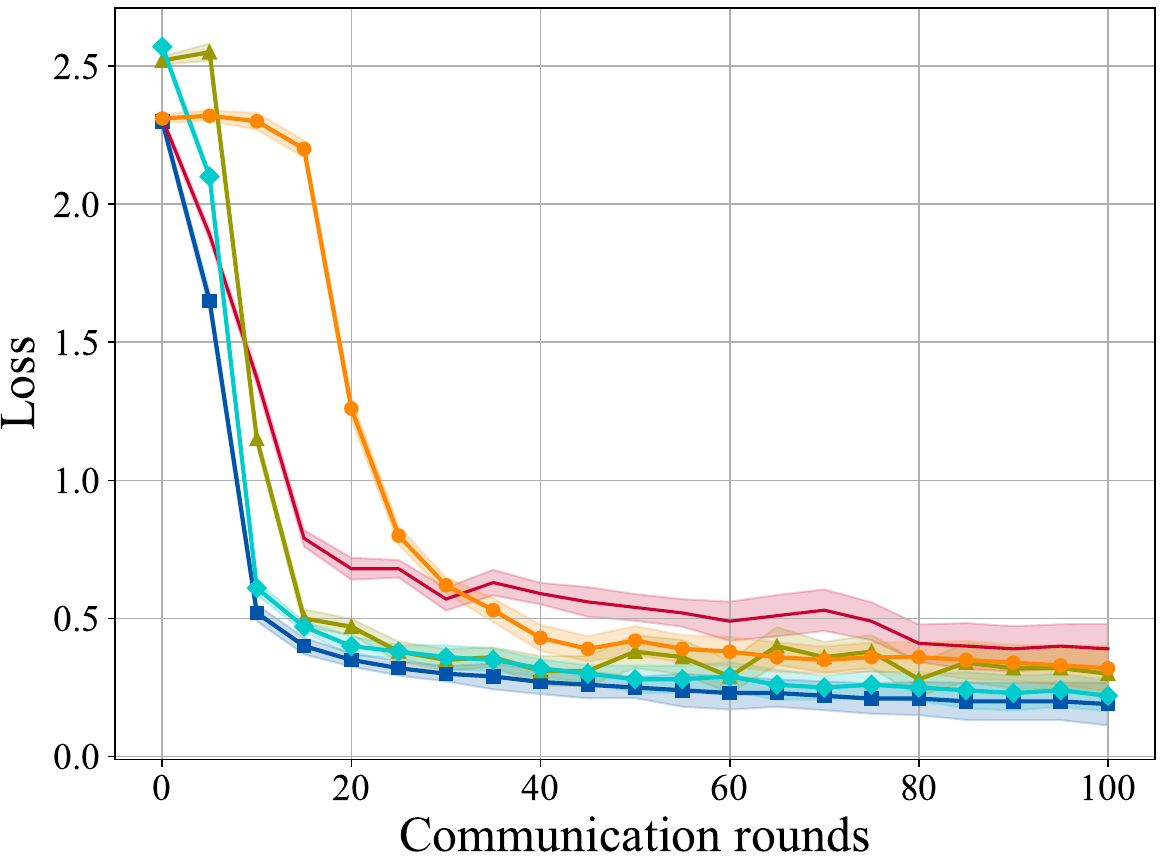}\label{Fig.11b}}
	\caption{Comparison of the ACC and Loss on MNIST.}
	\label{Fig.11}
\end{figure}
\begin{figure}[htb]
	\centering
	\subfloat[ACC]{\includegraphics[width=1.7in]{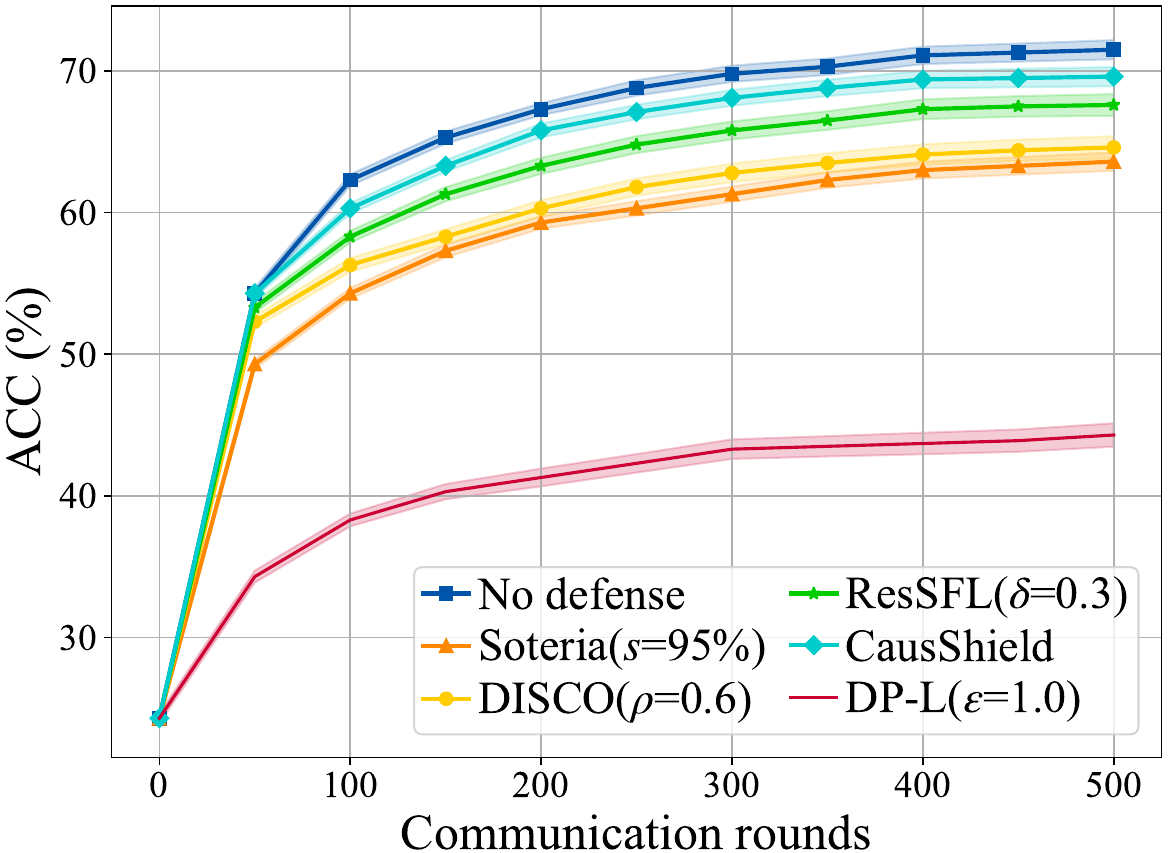}\label{Fig.12a}}
	\subfloat[Loss]{\includegraphics[width=1.7in]{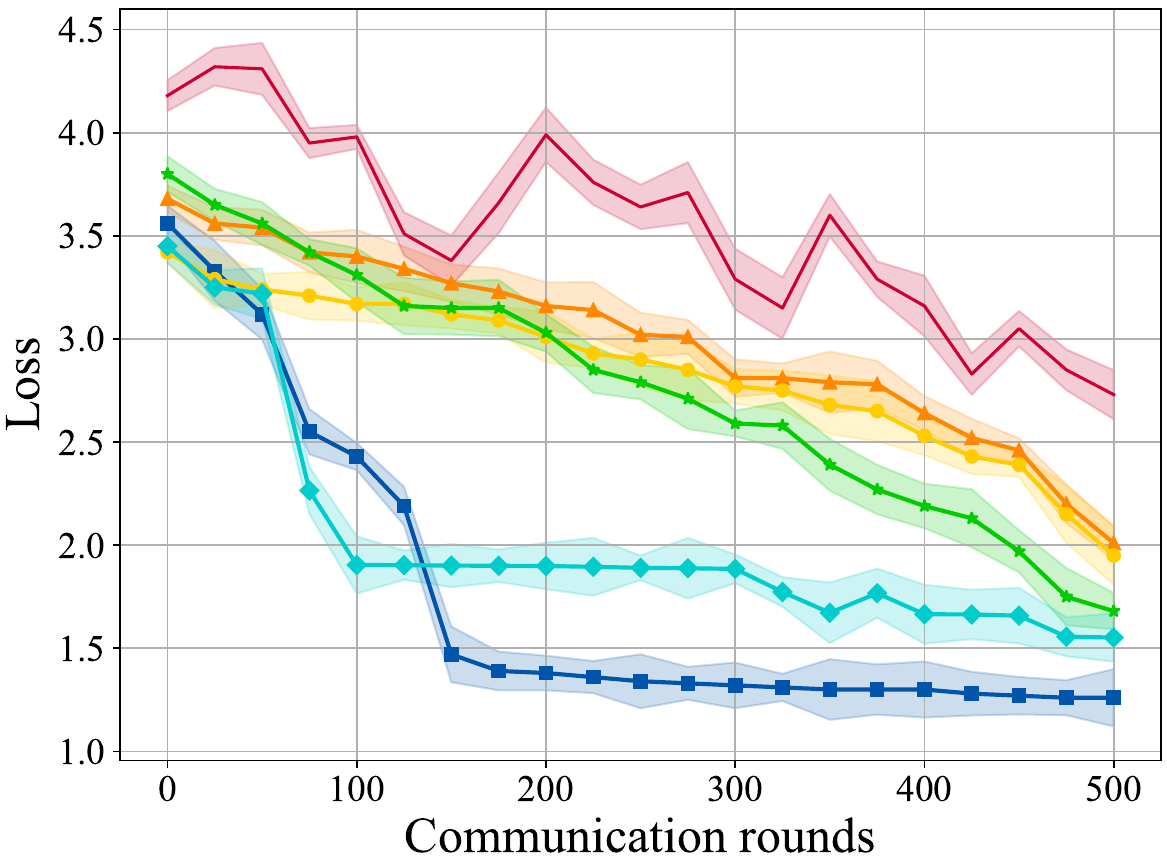}\label{Fig.12b}}
	\caption{Comparison of ACC and Loss on CIFAR-10.}
	\label{Fig.12}
\end{figure}
\begin{figure}[htb]
	\centering
	\subfloat[MNIST]{\includegraphics[width=1.7in]{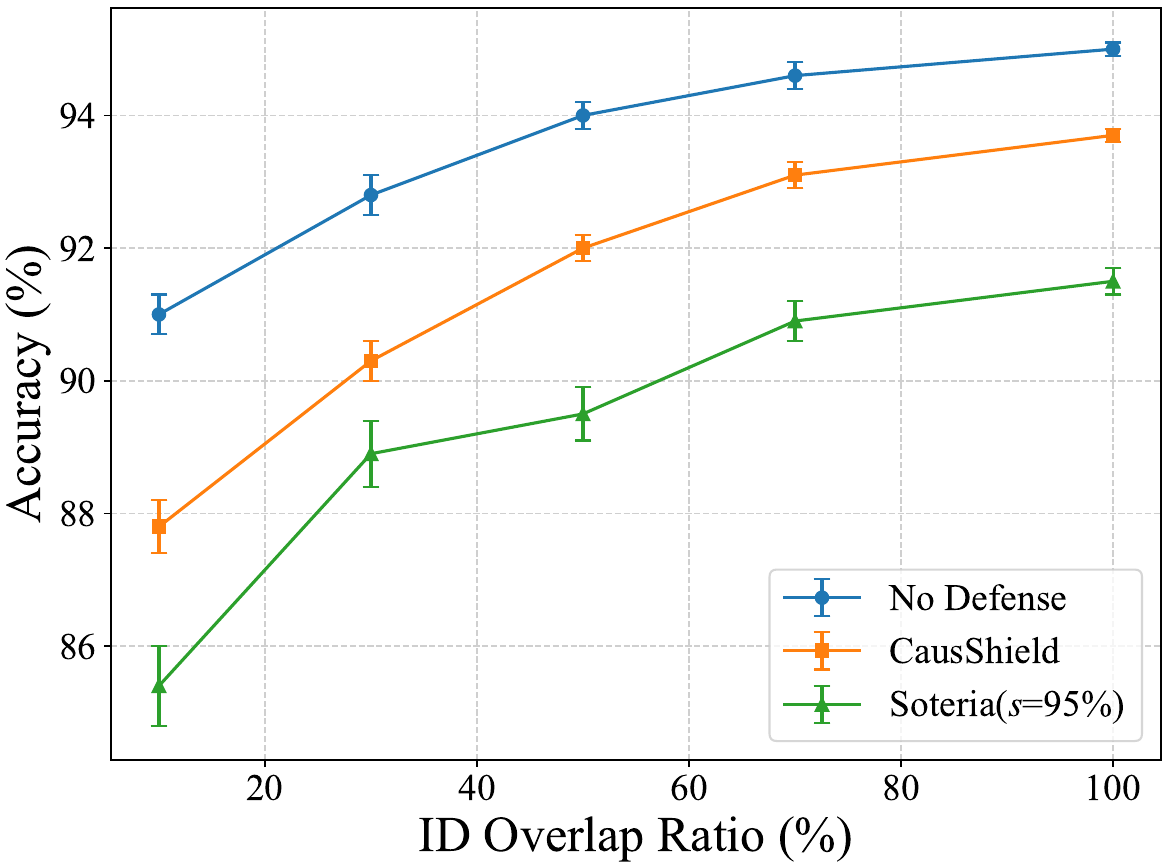}\label{Fig.13a}}
	\subfloat[CIFAR-10]{\includegraphics[width=1.7in]{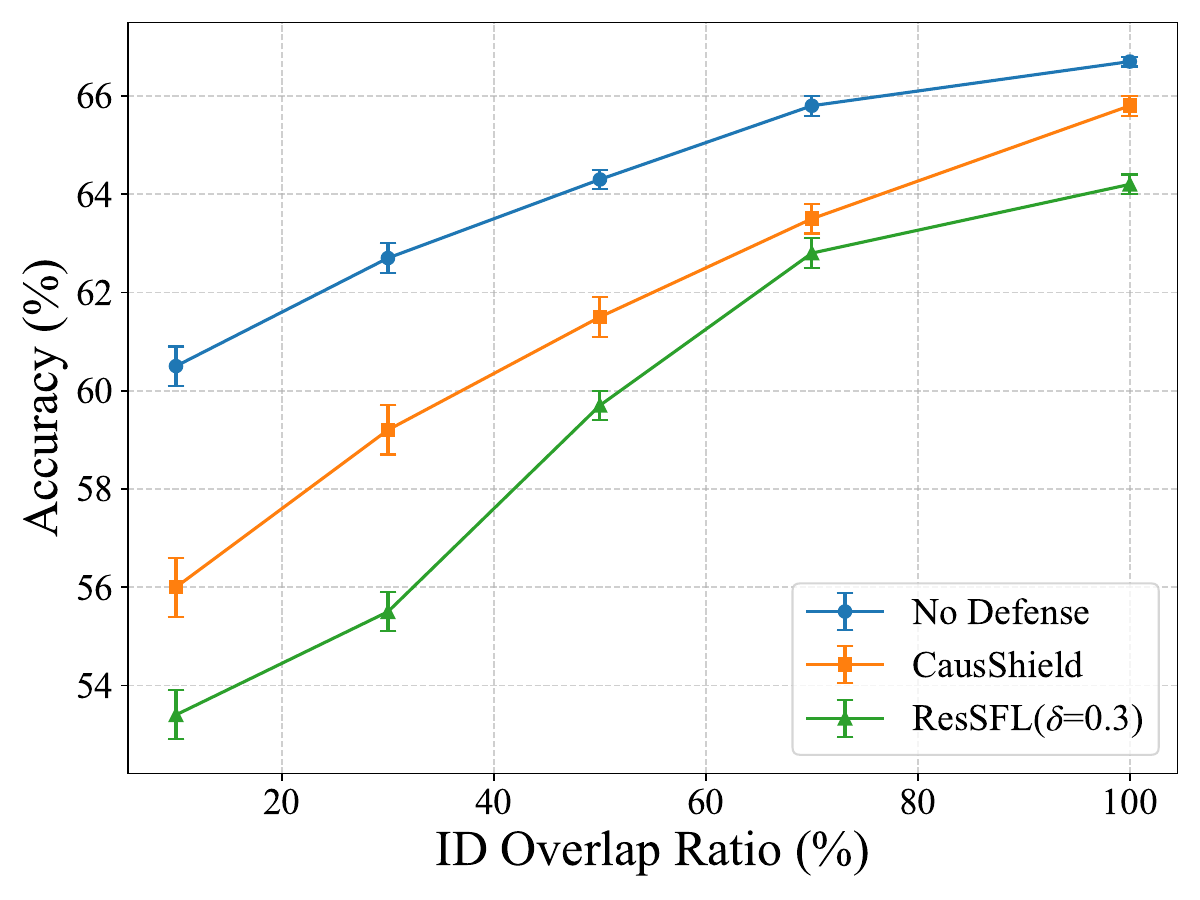}\label{Fig.13b}}
	\caption{Comparison of the ACC at varying ID overlap ratio.}
	\label{Fig.13}
\end{figure}
\begin{figure}[htb]
	\centering
	\includegraphics[width=2.35in]{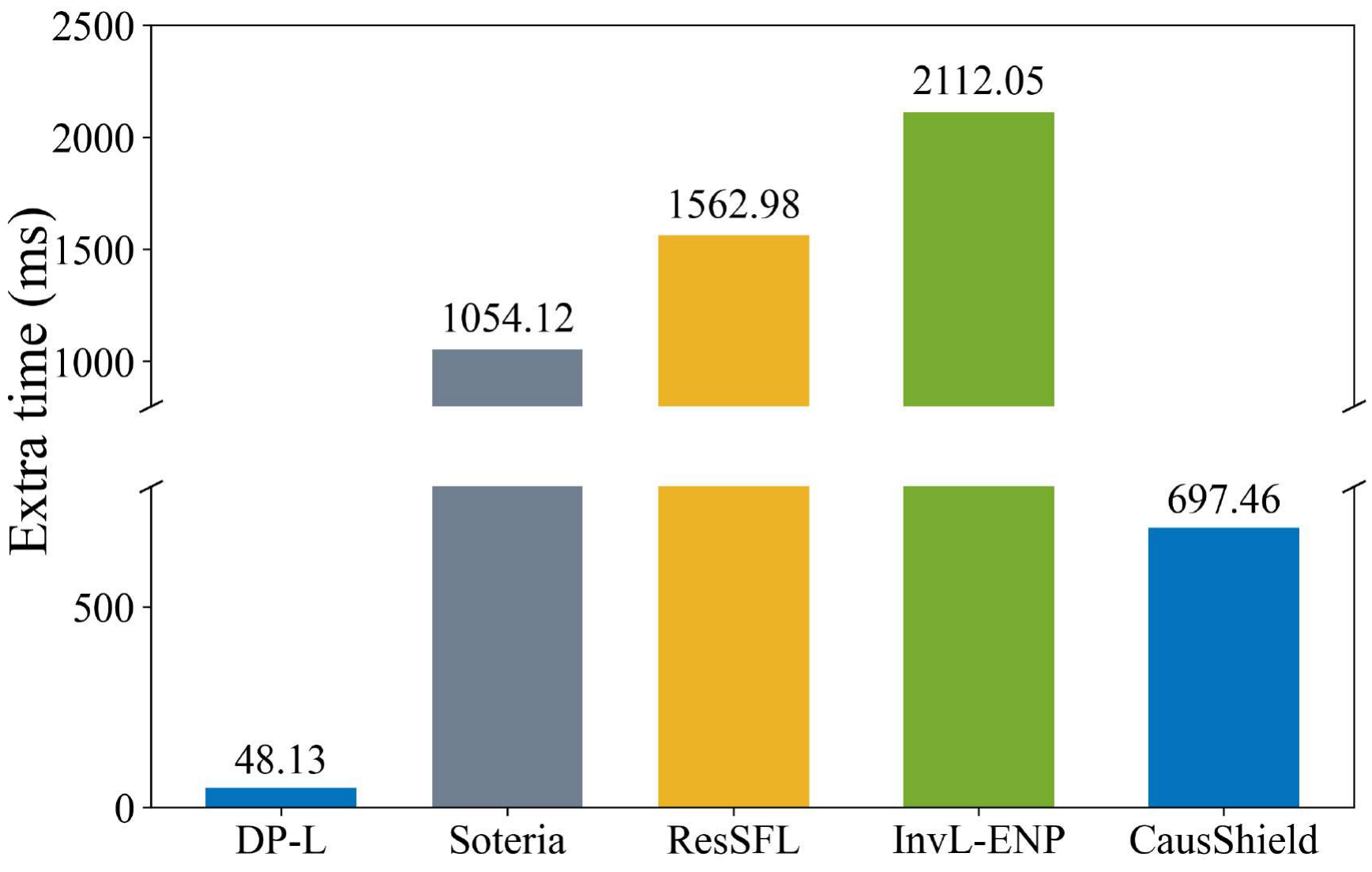}
	\caption{Illustration of extra processing time.}
	\label{Fig.14}
\end{figure}

\par On CIFAR-10, the privacy parameters for each method are set identically to those presented in Fig.~\ref{Fig.7}. Fig.~\ref{Fig.12} illustrates the round-wise testing accuracy and loss, where the following observations are made: (1) the unprotected VFL accuracy is the highest and serves as the baseline for utility evaluation. However, as mentioned earlier, it fails to defend against reconstruction attacks. (2) The noise perturbation based on the DP mechanism results in a significant decrease in model accuracy, approximately 15\%. This decrease is attributed to the irreversible disruption of intermediate representations caused by indiscriminate noise addition. (3) \name significantly outperforms SOTAs, with the utility degradation being almost negligible. This phenomenon demonstrates that \name reduces utility loss by largely preserving crucial information for classification during causal representation learning. Thus, \name validates our design objective of addressing the privacy-utility trade-off more flexibly. As also shown in TABLE~\ref{tab:emnist-cifar100-iid}, \name consistently achieves superior ACC on the extended EMNIST and CIFAR-100 datasets. These results align with the findings on MNIST and CIFAR-10, demonstrating the scalability of our method.

\begin{table}[htbp]
    \centering
    \caption{COMPARISON OF DEFENSE METHODS IN ACC(\%) 
    ON EMNIST and CIFAR-100.}
    \begin{minipage}{0.45\linewidth}
        \centering
        \textbf{(a) EMNIST}
        \scalebox{0.85}{\begin{tabular}{lc}
            \toprule
            \textbf{Method} & \textbf{ACC} \\
            \midrule
            No defense & 89.21 $\pm$ 0.85 \\
            DP-L($\varepsilon$=1.0) & 82.13 $\pm$ 1.10 \\
            Prune($k$=99\%) & 84.72 $\pm$ 0.92 \\
            Soteria($s$=95\%) & 86.58 $\pm$ 0.88 \\
            \rowcolor{mixbg}
            \name & 88.32 $\pm$ 0.75 \\
            \bottomrule
        \end{tabular}}
    \end{minipage}
    \hfill
    \begin{minipage}{0.45\linewidth}
        \centering
        \textbf{(b) CIFAR-100}
        \scalebox{0.8}{\begin{tabular}{lc}
            \toprule
            \textbf{Method} & \textbf{ACC} \\
            \midrule
            No defense & 62.20 $\pm$ 0.65 \\
            DP-L($\varepsilon$=1.5) & 38.80 $\pm$ 1.20 \\
            Soteria($s$=95\%) & 53.40 $\pm$ 2.00 \\
            ResSFL($\delta$=0.3) & 55.60 $\pm$ 1.40 \\
            \rowcolor{mixbg}
            \name & 56.90 $\pm$ 1.30 \\
            \bottomrule
        \end{tabular}}
    \end{minipage}
    \label{tab:emnist-cifar100-iid}
\end{table}

\noindent\textbf{Robustness under Asymmetric VFL.} To evaluate the robustness of our \name under realistic vertical heterogeneity, we extend our experiments to the asymmetrical VFL setting by varying the sample identifier (ID) overlap ratio between clients (from 10\% to 100\%), following the AVFL framework~\cite{Yang2020}. Note that our previous experiments were conducted under the symmetric VFL assumption with 100\% ID overlap. The asymmetrical setting reflects practical scenarios where smaller participants may only hold a partial subset of the global sample IDs. The privacy parameters for each method are as follows: Soteria ($s$=95\%), ResSFL ($\delta$=0.3). As shown in Fig.~\ref{Fig.13}, all methods suffer from performance degradation as the ID overlap ratio decreases. Nevertheless, our proposed \name consistently outperforms baseline methods (e.g., Soteria, ResSFL) across all overlap levels, demonstrating strong resilience to ID space asymmetry.

\subsection{Computational Efficiency Evaluations}

Fig.~\ref{Fig.14} illustrates the extra processing time required for different defense methods within a single communication round ($t$=1) on CIFAR-10 dataset. The privacy parameters used are aligned with those used in Fig.~\ref{Fig.7}, and we set $n=0.12$ for InvL-ENP. As a typical lightweight approach, DP-L only adds noise to representations and thus introduces minimal latency. The per-round overhead of \name arises from the causal representation learning module. The surrogate dataset generation, being a one-time offline operation, contributes no per-round cost. Furthermore, as VFL training progresses and the bottom model $f_k$ stabilizes, the generator initialized from $f_k$ requires fewer iterations $M$ to converge within each round, allowing $M$ to be progressively reduced and the per-round overhead to decrease accordingly.


\par In contrast, both the Soteria, InvL-ENP, and ResSFL methods noticeably slow down the training speed. Soteria requires a comprehensive traversal of each element within the representations, calculating their corresponding privacy benefits, and then conducting sorting and pruning operations. The primary computational overhead of InvL-ENP stems from the calculation of the Jacobian matrix for each input instance and the subsequent singular value analysis required to derive the adaptive noise scale. Similarly, the ResSFL method involves locally simulating attackers, iteratively adjusting the parameters of the base model and attack model until the reconstructed images no longer leak privacy information. In summary, our approach provides robust privacy protection and offers substantial benefits in terms of computational efficiency and sustained model performance.

The proposed \name introduces negligible communication overhead compared to the standard VFL baseline. It is evidenced by two key observations: first, the convergence behavior, as shown in Fig.~\ref{Fig.11}(b) and Fig.~\ref{Fig.12}(b), is nearly identical, confirming no added communication rounds; second, the uploaded causal representations maintain the same dimensionality as their original counterparts, guaranteeing an unchanged per-round transmission volume. Therefore, the advancement in data privacy afforded by \name comes at no extra communication cost.

\section{Conclusions}
\label{sec:7}

\par This work proposes \name to defend against sample reconstruction attacks in VFL. Grounded in the SCM perspective and our established theoretical foundation, \name decomposes intermediate representations and transmits only causal representations to eliminate the primary reconstruction attack surface while preserving model utility. This decomposition is achieved through two unsupervised modules: surrogate dataset generation and causal representation learning, requiring neither ground-truth labels nor task-specific supervision, ensuring seamless integration into standard VFL pipelines. Theoretical analysis confirms that transmitting only causal representations can resist sample reconstruction, and \name retains the convergence behavior of standard VFL. Extensive experiments demonstrate that \name consistently outperforms seven SOTAs in privacy protection, model utility, and computational efficiency.



\appendix

\subsection{\name's Algorithm~\ref{prop:singular}}
\label{append:alg}

\par The algorithm of \name is presented in Algorithm~\ref{alg:causshield}

\begin{algorithm}[htbp]
	\renewcommand{\algorithmicrequire}{\textbf{Input:}}
	\renewcommand{\algorithmicensure}{\textbf{Output:}}
	\renewcommand{\algorithmicreturn}{\textbf{Begin}}
	\caption{The \name algorithm}\label{alg:causshield}
	\begin{algorithmic}[1]
		\vspace{.2cm}
		\REQUIRE ~\\
		Variable value: $K$; $N$; $D$; $\hat D$; $B$; $\lambda$; $T$; Bottom models and its parameters $\{ {f_k};{\theta _k}\} _{k = 1}^K$; Top model and it's parameters $\{ {f_{\rm top}};{\theta _{\rm top}}\} $; Number of causal module optimizations $M$.\\
		\ENSURE ~\\
		The optimal model parameters $\{ {\theta _k}^*\} _{k = 1}^K$ and $\theta _{\rm top}^*$.\\
		\STATE \textbf {Active party executes:}\\
		\FOR {each communication round $t \in [1,T]$}
		\STATE Concatenates received representations ${{\bf{R}}^s} = [{\bf{R}}_1^s,{\bf{R}}_2^s,...,{\bf{R}}_K^s]$;
		\STATE Calculates the CE loss and obtain gradients $\nabla {\theta _{\rm top}}$;
		\STATE Updates the head model ${f_{\rm top}}$;
		\STATE Sends the cut-layer gradients ${\nabla _{{\theta _{\rm cut}}}}{\cal L}$ to passive parties;
		\ENDFOR
		\STATE \textbf {Passive parties $k \in [1,K]$ executes:}\\
		\FOR {each batch}
		\STATE Conducts forward propagation to compute intermediate representations ${{\bf{R}}_k} = \{ {f_k}({\theta _k};{x_{k,i}})\} _{i = 1}^B$;
		\FOR {each causal optimization module}
		\STATE Feeds raw data ${x_{k,i}}$ and corresponding surrogate data ${\hat x_{k,i}}$ to the generator ${G_k}$;
		\STATE Perform Z-score normalization on representations ${{\bf{R}}_k}$ and ${{\bf{\hat R}}_k}$;
		\STATE Computes decomposition loss defined in Eq. (\ref{eq15});
		\STATE Calculates the causal representations ${\bf{R}}_k^s$ and ${\bf{\hat R}}_k^s$;
		\STATE Establishes a Masker $\omega $ to evaluate each dimension contribution within the representations ${\bf{R}}_k^s$ and ${\bf{\hat R}}_k^s$ as Eq. (\ref{eq17});
		\STATE Obtain derived masks ${{\bm{m}}_{upp}}$ and ${{\bm{m}}_{low}}$ as Eq. (\ref{eq18});
		\STATE Computes two unsupervised losses $l_{score}^{upp}$ and $l_{score}^{low}$ as Eqs. (\ref{eq21})-(\ref{eq22});
		\STATE Optimizes the ${G_k}$, $\bm{\omega}$ according to Formula (\ref{eq23});
		\ENDFOR
		\STATE Updates the causal representations ${\bf{R}}_k^s$;
		\IF {receives the gradients from the active party}
		\STATE {Updates local bottom model ${f_k}$ by passing the cut-layer gradients.}
		\ENDIF
		\ENDFOR
	\end{algorithmic}
\end{algorithm}

\subsection{Proof of Theorem~\ref{prop:singular}}
\label{append:A}

\begin{proof}
The Jacobians $\partial U/\partial x$ and $\partial S/\partial x$ 
satisfy the following structural properties of the SCM:
\begin{equation}
    \sigma_k\!\left(\frac{\partial U}{\partial x}\right) \geq \mu > 0, 
    \quad 
    \left\|\frac{\partial S}{\partial x}\right\|_2 \leq \vartheta < \infty,
\end{equation}
where $\mu$ and $\vartheta$ are of comparable order; the first ensures 
the mapping from $x$ to $U$ is non-degenerate across all top-$k$ 
directions, and the second ensures $S$ has bounded sensitivity 
to $x$. Since $\mathrm{row}(G^S_x) \perp \mathrm{row}(G^U_x)$ 
by Definition~\ref{def:jacobian}, $G_x$ decomposes along 
orthogonal subspaces $\mathcal{V}^S$ and $\mathcal{V}^U$.

\textbf{Step 1: Lower bound on $\sigma_i(G^U_x)$.}
By the chain rule, $G^U_x = \frac{\partial \bm{r}^u}{\partial U} 
\cdot \frac{\partial U}{\partial x}$.
Since $\sigma_{\min}(\partial U/\partial x) \geq 
\sigma_k(\partial U/\partial x) \geq \mu$, applying the 
singular value product lower bound 
$\sigma_i(AB) \geq \sigma_i(A)\cdot\sigma_{\min}(B)$,
and noting that $i \leq k$ implies 
$\sigma_i(\partial \bm{r}^u/\partial U) \geq 
\sigma_k(\partial \bm{r}^u/\partial U) \geq \epsilon$ 
by Assumption~\ref{assump:noncausal}:
\begin{equation}
    \sigma_i(G^U_x) 
    \geq \sigma_i\!\left(\frac{\partial \bm{r}^u}{\partial U}\right)
    \cdot \sigma_{\min}\!\left(\frac{\partial U}{\partial x}\right)
    \geq \epsilon \cdot \mu, 
    \quad \forall\, i \in \{1,\ldots,k\}.
\end{equation}

\textbf{Step 2: Upper bound on $\sigma_i(G^S_x)$.}
By the chain rule, $G^S_x = \frac{\partial \bm{r}^s}{\partial S} 
\cdot \frac{\partial S}{\partial x}$.
Applying submultiplicativity 
$\sigma_i(AB) \leq \sigma_1(A)\cdot\sigma_1(B)$, 
together with Assumption~\ref{assump:causal}:
\begin{equation}
    \sigma_i(G^S_x)
    \leq \left\|\frac{\partial \bm{r}^s}{\partial S}\right\|_2
    \cdot \left\|\frac{\partial S}{\partial x}\right\|_2
    \leq \gamma \cdot \vartheta,
    \quad \forall\, i \in \{1,\ldots,k\}.
\end{equation}

\textbf{Step 3: Spectral dominance.}
Since $\gamma \ll \epsilon$ by Assumption~\ref{assump:causal} 
and $\mu, \vartheta$ are of comparable order, the spectral gap 
$\delta := \epsilon\cdot\mu - \gamma\cdot \vartheta \gg 0$, yielding:
\begin{equation}
    \sigma_i(G^U_x) \geq \epsilon\cdot\mu 
    \gg \gamma\cdot \vartheta \geq \sigma_i(G^S_x), 
    \quad \forall\, i \in \{1,\ldots,k\}.
\end{equation}

\textbf{Step 4: Dominant alignment with $\mathcal{V}^U$.}
Since $\mathrm{row}(G^S_x) \perp \mathrm{row}(G^U_x)$ and 
the spectral gap $\delta \gg 0$, applying the Wedin 
theorem to bound the perturbation of the 
top-$k$ right singular vectors of $G_x$ away from 
$\mathcal{V}^U$ gives:
\begin{equation}
    \left\|P_{\mathcal{V}^S} V_{J,i}\right\| 
    \leq \frac{\gamma \cdot \vartheta}{\delta} \ll 1,
    \quad \forall\, i \in \{1,\ldots,k\},
\end{equation}
where $P_{\mathcal{V}^S}$ is the orthogonal projector onto 
$\mathcal{V}^S$. Since $\|P_{\mathcal{V}^U} V_{J,i}\|^2 + 
\|P_{\mathcal{V}^S} V_{J,i}\|^2 = 1$, it follows that:
\begin{equation}
    \sum_{i=1}^{k} \left\|P_{\mathcal{V}^U} V_{J,i}\right\|^2 
    \gg 
    \sum_{i=1}^{k} \left\|P_{\mathcal{V}^S} V_{J,i}\right\|^2,
\end{equation}
where $P_{\mathcal{V}^U}$ is the orthogonal projector onto 
$\mathcal{V}^U$. Therefore, a rank-$k$ optimal 
attacker~\cite{Xu2025} primarily exploits $\mathcal{V}^U$ 
for reconstruction, confirming that non-causal components 
constitute the dominant reconstruction attack surface.
\end{proof}

\subsection{Proof of Theorem~\ref{thm:invloss}}
\label{append:B}

\begin{proof}
\textbf{Step 1: Defense success threshold $\tau_{\mathrm{th}}$.} We select the ENP-based defense's InvLoss upper bound for a rank-$k$ optimal 
attacker in~\cite{Xu2025} as the $\tau_{\mathrm{th}}$:
\begin{equation}
    \tau_{\mathrm{th}} := \sum_{i=k+1}^{d}(V_{J,i}^\top x)^2 
    + \sum_{i=1}^{k}
    \frac{(U_i^\top \varepsilon)^2}{\sigma_i^2 \cdot p} 
    + C_0,
\end{equation}
where the first term is the baseline tail residual 
energy without any defense, and the second term 
reflects the additional InvLoss increase contributed 
by the injected noise $\varepsilon$.

\textbf{Step 2: Reconstruction error lower bound for 
$\bm{r}^s$.}
When only $\bm{r}^s$ is transmitted, the attacker's 
Jacobian reduces to $G^S_x$. By 
Assumption~\ref{assump:causal}, $\sigma_i(G^S_x) \leq 
\gamma$ for all $i \in \{1,\ldots,d\}$. The maximum 
energy recoverable from $x^s$ via the rank-$k$ 
approximation is:
\begin{equation}
    \sum_{i=1}^{k}(V^{S\top}_{J,i} x^s)^2 
    \leq k \cdot \sigma_1^2(G^S_x) \cdot \|x^s\|^2
    \leq k\gamma^2\|x^s\|^2.
\end{equation}
Since $\{V^S_{J,i}\}_{i=1}^{d}$ form an orthonormal 
basis:
\begin{equation}
    \sum_{i=k+1}^{d}(V^{S\top}_{J,i} x^s)^2 
    = \|x^s\|^2 - \sum_{i=1}^{k}
    (V^{S\top}_{J,i} x^s)^2
    \geq (1 - k\gamma^2)\|x^s\|^2,
\end{equation}
yielding the lower bound $\mathrm{InvLoss}(\bm{r}^s) \geq  (1 - k\gamma^2)\|x^s\|^2 + C_0$.

\textbf{Step 3: $\mathrm{InvLoss}(\bm{r}^s)$ strictly 
exceeds $\tau_{\mathrm{th}}$.}
Comparing Steps~1-2, since $\gamma \ll \epsilon$:
\begin{equation}
    \mathrm{InvLoss}(\bm{r}^s) \geq 
    (1-k\gamma^2)\|x^s\|^2 + C_0 
    \approx \|x^s\|^2 + C_0.
\end{equation}
For $\tau_{\mathrm{th}}$, the tail residual term satisfies:
\begin{equation}
    \sum_{i=k+1}^{d}(V_{J,i}^\top x)^2 
    \approx \sum_{i=k+1}^{d}(V_{J,i}^\top x^U)^2,
\end{equation}
since by Theorem~\ref{prop:singular} the leading $k$ 
singular vectors are predominantly aligned with 
$\mathcal{V}^U$, leaving only a small residual in 
the tail. The noise term 
$\sum_{i=1}^{k}(U_i^\top\varepsilon)^2/(\sigma_i^2 
\cdot p)$ is also finite for any practical noise 
level $\delta$. Therefore:
\begin{equation}
\begin{aligned}
    \mathrm{InvLoss}(\bm{r}^s) 
    &\approx \|x^s\|^2 + C_0 \\
    &\gg \sum_{i=k+1}^{d}(V_{J,i}^\top x^U)^2 
    + \sum_{i=1}^{k}
    \frac{(U_i^\top\varepsilon)^2}{\sigma_i^2 \cdot p} 
    + C_0 
    \geq \tau_{\mathrm{th}},
\end{aligned}
\end{equation}
confirming $\mathrm{InvLoss}(\bm{r}^s) \gg \tau_{\mathrm{th}}$, 
i.e., transmitting only causal representations 
provably exceeds the ENP defense threshold, rendering 
sample reconstruction infeasible regardless of the 
attack strategy.

\textbf{Step 4: Comparison with full representation.}
For the full representation $\bm{r}^s + \bm{r}^u$, 
by Theorem~\ref{prop:singular} the top-$k$ singular 
directions are dominated by $\mathcal{V}^U$, so the 
rank-$k$ approximation recovers substantial energy 
from $x^U$, yielding a much smaller tail residual 
and lower InvLoss:
\begin{equation}
\begin{aligned}
    \mathrm{InvLoss}(\bm{r}^s) 
    &\geq (1-k\gamma^2)\|x^s\|^2 + C_0 \\
    &\gg \sum_{i=k+1}^{d}(V_{J,i}^\top x)^2 + C_0 \geq \mathrm{InvLoss}(\bm{r}^s + \bm{r}^u),
\end{aligned}
\end{equation}
where the middle $\gg$ follows from 
$\sigma_i(G^U_x) \gg \sigma_i(G^S_x)$ established 
in Theorem~\ref{prop:singular}. Therefore, we achieve $\mathrm{InvLoss}(\bm{r}^s) \gg 
    \mathrm{InvLoss}(\bm{r}^s + \bm{r}^u)$.
\end{proof}

\subsection{Proof of Theorems~\ref{theo:bounded}-\ref{theo:conver}}
\label{append:C}

\renewcommand\thelemma{\arabic{lemma}}
\begin{lemma}\label{lemma1}
	The squared norm of the partial derivatives of embeddings belongs to party $k$ multiplied by the Taylor series terms $R_{top}^t({\bf{R}_k} + {\bf{E}}_k^t)$ is bounded~\cite{Castiglia2022}:
	\begin{equation} \label{eq35}
		||{\nabla _{{\theta _k}}}{\bf{R}_k} \cdot R_{top}^t({\bf{R}_k} + {\bf{E}}_k^t)||^2 \le M_k^2\Phi _k^2||{\bf{E}}_k^t||_{\cal F}.
	\end{equation}
	where $R_{top}^t$ is denotes as the infinite sum of all terms in this Taylor series from the second partial derivatives; we let ${\bf{E}}_k^t$ be the causal representation error on each representation used in gradient calculation at iteration $t$.
\end{lemma}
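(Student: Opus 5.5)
The plan is to follow the Taylor-remainder argument of \cite{Castiglia2022}. The quantity to control is the extra term that appears in the partial derivative with respect to $\theta_k$ when the top model is evaluated at a perturbed representation $\mathbf{R}_k+\mathbf{E}_k^t$ instead of $\mathbf{R}_k$. Let $g(\mathbf{R}) := \nabla_{\mathbf{R}_k}\mathcal{L}_B^t(\mathbf{R})$. Expand $g$ around $\mathbf{R}_k$:
\begin{equation*}
g(\mathbf{R}_k+\mathbf{E}_k^t)=g(\mathbf{R}_k)+R_{top}^t(\mathbf{R}_k+\mathbf{E}_k^t),
\end{equation*}
where $R_{top}^t$ collects all terms generated by second and higher partial derivatives. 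By the chain rule (\ref{eq31}), the quantity to bound is the product $\nabla_{\theta_k}\mathbf{R}_k\cdot R_{top}^t(\mathbf{R}_k+\mathbf{E}_k^t)$.

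\textbf{Step 1.} Replace the infinite remainder by an integral form of the mean value theorem:
\begin{equation*}
R_{top}^t(\mathbf{R}_k+\mathbf{E}_k^t)=\int_0^1 \nabla^2_{\mathbf{R}_k}\mathcal{L}_B^t(\mathbf{R}_k+s\mathbf{E}_k^t)\,ds\cdot \mathbf{E}_k^t .
\end{equation*}
Take Frobenius norms. Assumption~\ref{assump:bounded} holds uniformly along the segment, so this gives $\|R_{top}^t\|_{\mathcal{F}}\le M_k\|\mathbf{E}_k^t\|_{\mathcal{F}}$.

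\textbf{Step 2.} Use submultiplicativity of the Frobenius norm together with Assumption~\ref{assump:bounded_grad}:
\begin{equation*}
\|\nabla_{\theta_k}\mathbf{R}_k\cdot R_{top}^t\|\le \Phi_k M_k\|\mathbf{E}_k^t\|_{\mathcal{F}}.
\end{equation*}
Squaring yields $M_k^2\Phi_k^2\|\mathbf{E}_k^t\|_{\mathcal{F}}^2$.

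The main obstacle is reconciling this with the stated right-hand side, which carries $\|\mathbf{E}_k^t\|_{\mathcal{F}}$ rather than its square. The squared Frobenius norm is the natural outcome of the argument, and it matches $\Psi_k^t=\|\bm\psi_k^t\|_{\mathcal{F}}^2$ as used in Theorem~\ref{theo:bounded}. I would therefore prove the squared version and read the displayed bound as shorthand for it, identifying $\mathbf{E}_k^t$ with $\bm\psi_k^t$ so that $\|\mathbf{E}_k^t\|^2_{\mathcal{F}}=\Psi_k^t$.

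A secondary subtlety is the Hessian assumption. It is stated at $\Theta$, but the argument needs it at every point on the segment between $\mathbf{R}_k$ and $\mathbf{R}_k+\mathbf{E}_k^t$. I would state that uniformity explicitly, as \cite{Castiglia2022} effectively does. If the Hessian bound were only available pointwise, the alternative is to bound the difference of gradients directly by the $M_k$-Lipschitz continuity of $g$, which gives the same constant.
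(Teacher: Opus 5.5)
Your proposal is correct and follows the same route the paper relies on. The paper does not prove this lemma itself but imports it from \cite{Castiglia2022}, whose argument is what you give: chain rule, a bound on the second-and-higher-order Taylor remainder of $\nabla_{\mathbf{R}_k}\mathcal{L}_B^t$ via the Hessian bound of Assumption~\ref{assump:bounded} (your integral form of the remainder, with that bound read uniformly along the segment, makes this step rigorous), and Frobenius submultiplicativity with Assumption~\ref{assump:bounded_grad}. You are also right that the right-hand side must be $M_k^2\Phi_k^2\|\mathbf{E}_k^t\|_{\mathcal{F}}^2$, as the paper's own use in the proof of Theorem~\ref{theo:bounded} confirms; note only that there $\mathbf{E}_k^t$ stacks the errors of the \emph{other} parties, so $\|\mathbf{E}_k^t\|_{\mathcal{F}}^2=\sum_{j\neq k}\Psi_j^t$ rather than $\Psi_k^t$.
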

\begin{lemma}[Bounded variance]\label{lemma2}
	Under \textbf{Assumptions~\ref{assump:bounded}-\ref{assump:varia}}, if $\eta  \le \frac{1}{{4T{{\max }_k}{L_k}}}$, we can bound the expected squared norm difference between $\nabla {{\cal L}_B}({\Theta ^{{t_0}}})$ at the iteration ${t_0}$ and $\nabla {{\cal L}_B}({\hat \Theta ^{{t_c}}})$ at the current iteration ${t_c}$ is expressed as:
	\begin{equation} \label{eq45}
		\begin{aligned}
			&\sum\limits_{t = {t_0}}^{{t_c}}\mathbb{E} {\left[ {{{\left\| {\nabla {{\cal L}_B}({{\hat \Theta }^t}) - \nabla {{\cal L}_B}({\Theta ^{{t_0}}})} \right\|}^2}} \right]} \\
			&\le 16{T^3}{\eta ^2}\sum\limits_{k = 1}^K {L_k^2} \parallel {\nabla _k}{\cal L}({\Theta ^{{t_0}}}){\parallel ^2}\\
			&\quad + 16{\eta ^2}{T^3}\sum\limits_{k = 1}^K {L_k^2\frac{{\pi _k^2}}{B}}\quad+64{T^3}\sum\limits_{k = 1}^K {M_k^2\Phi _k^2} ||{\rm E}_k^{{t_0}}||_{\cal F}^2.
		\end{aligned}
	\end{equation}
\end{lemma}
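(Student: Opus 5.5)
The plan is the standard unrolling argument for local-iteration / compressed-message VFL, in the style of \cite{Castiglia2022}. Fix a reference iteration $t_0$ and a current iteration $t_c$ with $t_c - t_0 \le T$. For each $t\in[t_0,t_c]$, write the difference $\nabla\mathcal{L}_B(\hat\Theta^t)-\nabla\mathcal{L}_B(\Theta^{t_0})$ blockwise over parties $k$. Then split each block as
\[
\bigl[\nabla_k\mathcal{L}_B(\hat\Theta^t)-\nabla_k\mathcal{L}_B(\Theta^t)\bigr]+\bigl[\nabla_k\mathcal{L}_B(\Theta^t)-\nabla_k\mathcal{L}_B(\Theta^{t_0})\bigr].
\]
The first bracket is the error induced by sending causal rather than raw representations. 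The second is pure parameter drift. Applying $\|a+b\|^2\le 2\|a\|^2+2\|b\|^2$ separates the two.

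\textbf{Error term.} Expand the top-model loss in a Taylor series around $\mathbf{R}_k$. By the chain rule \eqref{eq31}, the first bracket equals $\nabla_{\theta_k}\mathbf{R}_k\cdot R^t_{top}(\mathbf{R}_k+\mathbf{E}_k^t)$. Lemma~\ref{lemma1} then bounds its squared norm by a multiple of $M_k^2\Phi_k^2\|\mathbf{E}_k^t\|_{\mathcal F}^2$; we use the squared Frobenius form, consistent with Theorem~\ref{theo:bounded}. To express this through $\mathbf{E}_k^{t_0}$, we treat the causal error as frozen within the window, i.e. the generator is retrained only at communication points. Summing over at most $T$ iterations then gives a factor $T$, and the remaining factors of $T^2$ together with the constant $64$ come from the drift recursion below.

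\textbf{Drift term.} By the $L_k$-smoothness in Assumption~\ref{assump:smooth}, $\|\nabla_k\mathcal{L}_B(\Theta^t)-\nabla_k\mathcal{L}_B(\Theta^{t_0})\|^2\le L_k^2\|\Theta^t-\Theta^{t_0}\|^2$. Next, write $\Theta^t-\Theta^{t_0}=-\eta\sum_{\tau=t_0}^{t-1}\nabla\mathcal{L}_B(\hat\Theta^\tau)$. By Cauchy--Schwarz this is at most $\eta^2 T\sum_\tau\|\nabla\mathcal{L}_B(\hat\Theta^\tau)\|^2$. Each summand is decomposed again as
\[
\nabla\mathcal{L}(\Theta^{t_0}) + \bigl(\nabla\mathcal{L}_B-\nabla\mathcal{L}\bigr) + \bigl(\nabla\mathcal{L}_B(\hat\Theta^\tau)-\nabla\mathcal{L}_B(\Theta^{t_0})\bigr),
\]
and the three pieces are handled as follows:
\begin{itemize}
\item the full gradient at $t_0$ is kept as is;
\item the variance piece is bounded by $\pi_k^2/B$ using Assumptions~\ref{assump:unbiased}--\ref{assump:varia};
\item the last piece is exactly the quantity being bounded.
\end{itemize}
Summing over $t\in[t_0,t_c]$ contributes another factor $T$, which produces the $T^3\eta^2$ prefactors.

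\textbf{Closing the recursion (main obstacle).} Let $S$ denote the left-hand side. The previous steps give an inequality of the form
\[
S\le c_1T^3\eta^2\sum_k L_k^2\|\nabla_k\mathcal{L}(\Theta^{t_0})\|^2+c_1T^3\eta^2\sum_k L_k^2\frac{\pi_k^2}{B}+c_2 T^3\sum_k M_k^2\Phi_k^2\|\mathbf{E}_k^{t_0}\|_{\mathcal F}^2+c_3T^2\eta^2\max_k L_k^2\, S.
\]
The condition $\eta\le 1/(4T\max_kL_k)$ makes $c_3T^2\eta^2\max_kL_k^2\le 1/2$ for suitable $c_3$. The $S$ term can then be absorbed into the left-hand side, at the cost of doubling every constant; this is where $16$ and $64$ arise. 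The delicate part is bookkeeping the constants and the powers of $T$, so that the absorption coefficient is genuinely below $1$ while the error term retains the stated $64T^3$ factor. I would first carry out the computation with generic constants and only then fix them to match \eqref{eq45}.
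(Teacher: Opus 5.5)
Your skeleton (Lemma~\ref{lemma1} for the message error, smoothness plus unrolling for the drift, a three-way split, absorption via the step size) is the standard one behind this result. The paper gives no proof of its own and imports the lemma from \cite{Castiglia2022}. The gap is the comparison point you route through.

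\textbf{The gap: routing through the global iterate.} You split $\nabla_k\mathcal{L}_B(\hat\Theta^t)-\nabla_k\mathcal{L}_B(\Theta^{t_0})$ through the \emph{global} iterate $\Theta^t$. You then bound $\|\nabla_k\mathcal{L}_B(\Theta^t)-\nabla_k\mathcal{L}_B(\Theta^{t_0})\|^2\le L_k^2\|\Theta^t-\Theta^{t_0}\|^2$ and unroll $\Theta^t-\Theta^{t_0}$ through the full gradient. Summed over $k$, every block inherits the drift of \emph{all} parties, with three consequences.
\begin{itemize}
\item You obtain terms like $(\sum_kL_k^2)\,\eta^2T^3\|\nabla\mathcal{L}(\Theta^{t_0})\|^2$ and $(\sum_kL_k^2)\,\eta^2T^3\sum_{k'}\pi_{k'}^2/B$, not the block-matched terms of \eqref{eq45}.
\item The self-referential term is $6\eta^2T^2(\sum_kL_k^2)\,S$, not $c_3T^2\eta^2\max_kL_k^2\,S$ as you wrote. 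Under $\eta\le 1/(4T\max_kL_k)$ its coefficient is only bounded by $3K/8$, so absorption fails for large $K$. Even when it succeeds, the result is weaker than \eqref{eq45}.
\item This is not bookkeeping: with all parties moving simultaneously, the stated inequality is false. Take scalar blocks, identity bottom models, $\mathcal{L}(\Theta)=\frac12(\sum_k\theta_k)^2$, no sampling noise and no representation error. Then $L_k=\sqrt K$, each gradient block equals $s^t=\sum_j\theta_j^t$, and $s^{t+1}=(1-K\eta)s^t$. As $K\eta T\to 0$ (compatible with the step-size hypothesis), the left side of \eqref{eq45} is about $K^3\eta^2T^3s^2/3$ while the right side is $16K^2\eta^2T^3s^2$, a ratio tending to $K/48$.
\end{itemize}
Your ``frozen error'' device comes from the same choice. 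In the window-based protocol the lemma describes, block $k$ of $\hat\Theta^t$ holds the other parties' messages from $t_0$. Its distance to $\Theta^t$ therefore also contains their drift since $t_0$, which $\|\mathbf{E}_k^{t_0}\|$ does not control.

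\textbf{The missing idea: a party-local reference point.} The block-matched form of \eqref{eq45} encodes a party-local reference point. For block $k$, pass through $\Theta_k^t$: party $k$'s own parameters at time $t$, with every other party's raw representation held at its $t_0$ value.
\begin{itemize}
\item Then $\nabla_k\mathcal{L}_B(\hat\Theta^t)-\nabla_k\mathcal{L}_B(\Theta_k^t)$ is exactly the $t_0$ message error. Lemma~\ref{lemma1} (in squared form) bounds it by $M_k^2\Phi_k^2\|\mathbf{E}_k^{t_0}\|_{\mathcal F}^2$ with no extra assumption.
\item $\Theta_k^t$ and $\Theta^{t_0}$ differ only in block $k$. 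So smoothness involves only $\theta_k^t-\theta_k^{t_0}=-\eta\sum_{\tau=t_0}^{t-1}\nabla_k\mathcal{L}_B(\hat\Theta^\tau)$.
\end{itemize}
Let $S_k$ be the $k$-th block of the left side and $S=\sum_kS_k$. Running your three-way split block by block gives
\[ S\le 2T\sum_kM_k^2\Phi_k^2\|\mathbf{E}_k^{t_0}\|_{\mathcal F}^2+6\eta^2T^3\sum_kL_k^2\Bigl(\|\nabla_k\mathcal{L}(\Theta^{t_0})\|^2+\frac{\pi_k^2}{B}\Bigr)+6\eta^2T^2\sum_kL_k^2S_k. \]
The last term is at most $6\eta^2T^2\max_kL_k^2\,S\le\frac38 S$. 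Absorbing it gives constants $\frac{16}{5}T$ and $\frac{48}{5}\eta^2T^3$, which lie inside the stated ones. The $T^3$ on the error term is slack, not something the recursion produces.
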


\begin{proof}
	\par By employing the chain rule and Taylor series expansion on \textbf{Lemma~\ref{lemma1}} in the work~\cite{Castiglia2022}, we initially prove the boundedness of causal representation error, i.e., \textbf{Theorem~\ref{theo:bounded}}.
	\par Combining the definition of causal representation error (Eq. (\ref{eq34}), we can derive the following equation:
	\begin{equation} \label{eq37}
		{\nabla _{_k}}{\cal L}_B^t({\bf{R}_k} + {\bf{E}}_k^t) := {\nabla _{_k}}{\cal L}_B^t({\bf{\hat R}}_k^s).
	\end{equation}
	Applying the chain rule to ${\nabla _{_k}}{{\cal L}_B}({\bf{\hat R}}_k^s)$, Eq. (\ref{eq37}) can be rewritten as:
	\begin{equation} \label{eq38}
		{\nabla _{k}}{\cal L}_B^t({\bf{\hat R}}_k^s) = {\nabla _{{\theta _k}}}{{\bf{R}_k} \cdot {\nabla _{k}}{\cal L}_B^t({{\bf{R}_k} + {\bf{E}}_k^t)}}.
	\end{equation}
	Next, we apply Taylor series expansion to rewrite Eq. (\ref{eq38}):
	\begin{equation} \label{eq39}
		\begin{aligned}
			{\nabla _{k}}{\cal L}_B^t({\bf{\hat R}}_k^s) &= {\nabla _{{\theta _k}}}{\bf{R}_k}[{\nabla _{{\bf{R}_k}}}{{\cal L}_B}({\bf{R}_k}) + R_{\rm top}^t({\bf{R}_k} + {\bf{E}}_k^t)]\\
			&= {\nabla _k}{\cal L}_B^t({\bf{R}_k}) + {\nabla _{{\theta _k}}}{\bf{R}_k} \cdot R_{\rm top}^t({\bf{R}_k} + {\bf{E}}_k^t).
		\end{aligned}
	\end{equation}
	By reordering the terms and incorporating the expected value and squared 2-norm, we can derive a more refined bound:
	\begin{equation} \label{eq40}
		\begin{aligned}
			&\mathbb{E}\parallel {\nabla _{_k}}{\cal L}_B^t({\bf{\hat R}}_k^s) - {\nabla _{_k}}{\cal L}_B^t({{\bf{\hat R}}_k}){\parallel ^2}\\
			&= \mathbb{E}\parallel {\nabla _{{\theta _k}}}{{\bf{\hat R}}_k} \cdot R_{top}^t({{\bf{\hat R}}_k} + {\bf{E}}_k^t){\parallel ^2}\\
			&\le M_k^2\Phi _k^2||{\bf{E}}_k^t|{|_{\cal F}}\\
			&= M_k^2\Phi _k^2\sum\limits_{j \ne k} {||\psi _j^t||_{\cal F}^2} \\
			&= M_k^2\Phi _k^2\sum\limits_{j \ne k} {\Psi _j^t}.
		\end{aligned}
	\end{equation}
	Based on the aforementioned proof, Theorem~\ref{theo:bounded} holds.
\end{proof}

\begin{proof}
	\par \textbf{Theorem~\ref{theo:conver}}: By Assumption~\ref{assump:bounded}, the predicted difference of any two communication rounds can be expressed as:
	\begin{equation} \label{eq47}
		\begin{aligned}
			&{\cal L}({\Theta ^{{t_c}}}) - {\cal L}({\Theta ^{{t_0}}})\\
			&\le \left\langle {\nabla {\cal L}({\Theta ^{{t_0}}}),{\Theta ^{{t_c}}} - {\Theta ^{{t_0}}}} \right\rangle  + \frac{L}{2}{\left\| {{\Theta ^{{t_c}}} - {\Theta ^{{t_0}}}} \right\|^2}\\
			&=  - \left\langle {\nabla {\cal L}({\Theta ^{{t_0}}}),\eta \sum\limits_{t = {t_0}}^{{t_c}} {\nabla {{\cal L}_B}({{\hat \Theta }^t})} } \right\rangle  + \frac{L}{2}{\left\| {\eta \sum\limits_{t = {t_0}}^{{t_c}} {\nabla {{\cal L}_B}({{\hat \Theta }^t})} } \right\|^2} \\
			&\le  - \eta \sum\limits_{t = {t_0}}^{{t_c}} {\left\langle {\nabla {\cal L}({\Theta ^{{t_0}}}),\nabla {{\cal L}_B}({{\hat \Theta }^t})} \right\rangle }  + \frac{{LT}}{2}{\eta ^2}\sum\limits_{t = {t_0}}^{{t_c}} {{{\left\| {\nabla {{\cal L}_B}({{\hat \Theta }^t})} \right\|}^2}} \\
			&\le - \eta \sum\limits_{t = {t_0}}^{{t_c}} {\left\langle {\nabla {\cal L}({\Theta ^{{t_0}}}),\nabla {{\cal L}_B}({{\hat \Theta }^t}) - \nabla {{\cal L}_B}({\Theta ^{{t_0}}})} \right\rangle } \\
			&- \eta \sum\limits_{t = {t_0}}^{{t_c}} {\left\langle {\nabla {\cal L}({\Theta ^{{t_0}}}),\nabla {{\cal L}_B}({\Theta ^{{t_0}}})} \right\rangle } \\
			& + \frac{{LT}}{2}{\eta ^2}\sum\limits_{t = {t_0}}^{{t_c}} {{{\left\| {\nabla {{\cal L}_B}({{\hat \Theta }^t}) - \nabla {{\cal L}_B}({\Theta ^{{t_0}}}) + \nabla {{\cal L}_B}({\Theta ^{{t_0}}})} \right\|}^2}} \\
			&\le \eta \sum\limits_{t = {t_0}}^{{t_c}} {\left\langle { - \nabla {\cal L}({\Theta ^{{t_0}}}),\nabla {{\cal L}_B}({{\hat \Theta }^t}) - \nabla {{\cal L}_B}({\Theta ^{{t_0}}})} \right\rangle } \\
			&- \eta \sum\limits_{t = {t_0}}^{{t_c}} {\left\langle {\nabla {\cal L}({\Theta ^{{t_0}}}),\nabla {{\cal L}_B}({\Theta ^{{t_0}}})} \right\rangle } \\
			&+ LT{\eta ^2}\sum\limits_{t = {t_0}}^{{t_c}} {{{\left\| {\nabla {{\cal L}_B}({{\hat \Theta }^t}) - \nabla {{\cal L}_B}({\Theta ^{{t_0}}})} \right\|}^2}} \\
			&+ LT{\eta ^2}\sum\limits_{t = {t_0}}^{{t_c}} {{{\left\| {\nabla {{\cal L}_B}({\Theta ^{{t_0}}})} \right\|}^2}} \\
			&\le \frac{\eta }{2}\sum\limits_{t = {t_0}}^{{t_c}} {{{\left\| {\nabla {\cal L}({\Theta ^{{t_0}}})} \right\|}^2}}  + \frac{\eta }{2}\sum\limits_{t = {t_0}}^{{t_c}} {{{\left\| {\nabla {{\cal L}_B}({{\hat \Theta }^t}) - \nabla {{\cal L}_B}({\Theta ^{{t_0}}})} \right\|}^2}} \\
			&- \eta \sum\limits_{t = {t_0}}^{{t_c}} {\left\langle {\nabla {\cal L}({\Theta ^{{t_0}}}),\nabla {{\cal L}_B}({\Theta ^{{t_0}}})} \right\rangle } \\
			& + LT{\eta ^2}\sum\limits_{t = {t_0}}^{{t_c}} {{{\left\| {\nabla {{\cal L}_B}({{\hat \Theta }^t}) - \nabla {{\cal L}_B}({\Theta ^{{t_0}}})} \right\|}^2}} \\
			&+ LT{\eta ^2}\sum\limits_{t = {t_0}}^{{t_c}} {{{\left\| {\nabla {{\cal L}_B}({\Theta ^{{t_0}}})} \right\|}^2}}.
		\end{aligned}
	\end{equation}
	Next, based on Assumptions~\ref{assump:unbiased}-\ref{assump:varia}, we take the mean of both sides of the Formula (\ref{eq47}):
	\begin{equation} \label{eq48}
		\begin{aligned}
			&\mathbb{E}[ {{\cal L}({\Theta ^{{t_c}}})}] - {\cal L}({\Theta ^{{t_0}}})\\
			&\le  - \frac{\eta }{2}\sum\limits_{t = {t_0}}^{{t_c}} {{{\left\| {\nabla {\cal L}({\Theta ^{{t_0}}})} \right\|}^2}}  + LT{\eta ^2}\sum\limits_{t = {t_0}}^{{t_c}} {{\mathbb{E}^{{t_0}}}\left[ {{{\left\| {\nabla {{\cal L}_B}({\Theta ^{{t_0}}})} \right\|}^2}} \right]} \\
			&+ \frac{\eta }{2}\sum\limits_{t = {t_0}}^{{t_c}} {(1 + 2LT\eta ){\mathbb{E}^{{t_0}}}\left[ {{{\left\| {\nabla {{\cal L}_B}({{\hat \Theta }^t}) - \nabla {{\cal L}_B}({\Theta ^{{t_0}}})} \right\|}^2}} \right]} \\
			&\le  - \frac{\eta }{2}\sum\limits_{t = {t_0}}^{{t_c}} {(1 - 2LT\eta ){{\left\| {\nabla {\cal L}({\Theta ^{{t_0}}})} \right\|}^2}}  + LT{\eta ^2}\sum\limits_{k = 1}^K {\frac{{\pi _k^2}}{B}} \sum\limits_{t = {t_0}}^{{t_c}} {{\eta ^2}} \\
			&+ \frac{\eta }{2}\sum\limits_{t = {t_0}}^{{t_c}} {(1 + 2LT\eta ){\mathbb{E}^{{t_0}}}\left[ {{{\left\| {\nabla {{\cal L}_B}({{\hat \Theta }^t}) - \nabla {{\cal L}_B}({\Theta ^{{t_0}}})} \right\|}^2}} \right]} \\
			&=  - \frac{T}{2}\eta (1 - 2LT\eta ){\left\| {\nabla {\cal L}({\Theta ^{{t_0}}})} \right\|^2} + LT{\eta ^2}\sum\limits_{k = 1}^K {\frac{{\pi _k^2}}{B}} \\
			&+ \frac{\eta }{2}\sum\limits_{t = {t_0}}^{{t_c}} {(1 + 2LT\eta ){\mathbb{E}^{{t_0}}}\left[ {{{\left\| {\nabla {{\cal L}_B}({{\hat \Theta }^t}) - \nabla {{\cal L}_B}({\Theta ^{{t_0}}})} \right\|}^2}} \right]}.
		\end{aligned}
	\end{equation}
	By introducing Lemma~\ref{lemma2}, we can further deduce:
	\begin{equation} \label{eq49}
		\begin{aligned}
			&\mathbb{E}\left[ {{\cal L}({\Theta ^{{t_c}}})} \right] - {\cal L}({\Theta ^{{t_0}}})\\
			&=  - \frac{T}{2}\eta (1 - 2LT\eta ){\left\| {\nabla {\cal L}({\Theta ^{{t_0}}})} \right\|^2}\\
			&+ 8{T^3}{\eta ^3}(1 + 2LT\eta )\sum\limits_{k = 1}^K {L_k^2||{\nabla _k}{\cal L}({\Theta ^{{t_0}}})|{|^2}} \\
			&+ 8{T^3}{\eta ^3}(1 + 2LT\eta )\sum\limits_{k = 1}^K {L_k^2\frac{{\pi _k^2}}{B}} \\
			&+ 32{T^3}\eta (1 + 2LT\eta )\sum\limits_{k = 1}^K {M_k^2\Phi _k^2||{\bf{E}}_k^t||_{\cal F}^2}  + L{T^2}{\eta ^2}\sum\limits_{k = 1}^K {\frac{{\pi _k^2}}{B}} \\
			&\le  - \frac{T}{2}\sum\limits_{k = 1}^K {\eta (1 - 2LT\eta  - 16{T^2}L_k^2{\eta ^2} - 16{T^3}L_k^2{\eta ^3})||{\nabla _k}{\cal L}({\Theta ^{{t_0}}})|{|^2}} \\
			&+ (L{T^2}{\eta ^2} + 8{T^3}L_k^2{\eta ^3} + 8{T^4}LL_k^2{\eta ^4})\sum\limits_{k = 1}^K {\frac{{\pi _k^2}}{B}} \\
			&+ 32{T^3}\eta (1 + 2LT\eta )\sum\limits_{k = 1}^K {M_k^2\Phi _k^2||{\bf{E}}_k^t||_{\cal F}^2} .
		\end{aligned}
	\end{equation}
	Next, taking into account the conditional limitations in Theorem~\ref{theo:conver}, i.e., $\eta  \le \frac{1}{{16T\max \{ L,{{\max }_k},{L_k}\} }}$. The Formula (\ref{eq49}) can therefore be rewritten as follows:
	\begin{equation} \label{eq50}
		\begin{aligned}
			&\mathbb{E}\left[ {{\cal L}({\Theta ^{{t_c}}})} \right] - {\cal L}({\Theta ^{{t_0}}})\\
			&=  - \frac{T}{2}\sum\limits_{k = 1}^K {\eta (1 - \frac{1}{8} - \frac{1}{{16}} - \frac{1}{{{{16}^2}}}){{\left\| {{\nabla _k}{\cal L}({\Theta ^{{t_0}}})} \right\|}^2}} \\
			&+ (L{T^2}{\eta ^2} + 8{T^3}L_k^2{\eta ^3} + 8{T^4}LL_k^2{\eta ^4})\sum\limits_{k = 1}^K {\frac{{\pi _k^2}}{B}} \\
			&+ 16{T^3}\eta (1 + 2LT\eta )\sum\limits_{k = 1}^K {M_k^2\Phi _k^2||{\bf E}_k^t||_{\cal F}^2} \\
			&\le  - \frac{{3T}}{8}\eta {\left\| {\nabla {\cal L}({\Theta ^{{t_0}}})} \right\|^2}\\
			&+ (L{T^2}{\eta ^2} + 8{T^3}L_k^2{\eta ^3} + 8{T^4}LL_k^2{\eta ^4})\sum\limits_{k = 1}^K {\frac{{\pi _k^2}}{B}} \\
			&+ 32{T^3}\eta (1 + 2LT\eta )\sum\limits_{k = 1}^K {M_k^2\Phi _k^2||{\bf E}_k^t||_{\cal F}^2} .
		\end{aligned}
	\end{equation}
	We rewrite the $\eta {\left\| {\nabla {\cal L}({\Theta ^{{t_0}}})} \right\|^2}$ term as follows:
	\begin{equation} \label{eq51}
		\begin{aligned}
			& \eta {\left\| {\nabla {\cal L}({\Theta ^t})} \right\|^2}\\
			& \le \frac{8}{3}(LT{\eta ^2} + 8{T^2}L_k^2{\eta ^3} + 8{T^3}LL_k^2{\eta ^4})\sum\limits_{k = 1}^K {\frac{{\pi _k^2}}{B}} \\
			& + 86{T^2}\eta (1 + 2LT\eta )\sum\limits_{k = 1}^K {M_k^2\Phi _k^2||{\bf E}_k^t||_{\cal F}^2} .
		\end{aligned}
	\end{equation}
	We consider the total expectation of inequality~(\ref{eq51}) across $T$ iterations:
	\begin{equation} \label{eq52}
		\begin{aligned}
			& \eta \mathbb{E}\left[ {{{\left\| {\nabla {\cal L}({\Theta ^t})} \right\|}^2}} \right] \\
			& \le \frac{{4\left[ {{\cal L}({\Theta ^0}) - \mathbb{E}\left[ {{\cal L}({\Theta ^T})} \right]} \right]}}{T} \\
			& \quad + \frac{8}{3}(LT{\eta ^2} + 8{T^2}L_k^2{\eta ^3} + 8{T^3}LL_k^2{\eta ^4})\sum\limits_{k = 1}^K {\frac{{\pi _k^2}}{B}} \\
			& \quad + 86{T^2}\eta (1 + 2LT\eta )\sum\limits_{k = 1}^K {M_k^2\Phi _k^2\left[ {||{\bf E}_k^t||_{\cal F}^2} \right]} .
		\end{aligned}
	\end{equation}
	Recalling Theorem~\ref{theo:bounded}, the scope of the Formula (\ref{eq52}) can be limited as:
	
	\begin{equation} \label{eq53}
		\begin{aligned}
			& \eta \mathbb{E}\left[ {{{\left\| {\nabla {\cal L}({\Theta ^t})} \right\|}^2}} \right] \\
			& \le \frac{{4\left[ {{\cal L}({\Theta ^0}) - \left[ {{\cal L}({\Theta ^T})} \right]} \right]}}{T} \\
			& \quad + \frac{8}{3}\sum\limits_{t = 1}^T {(LT{\eta ^2} + 8{T^2}L_k^2{\eta ^3} + 8{T^3}LL_k^2{\eta ^4})\sum\limits_{k = 1}^K {\frac{{\pi _k^2}}{B}} } \\
			& \quad + 86{T^2}\sum\limits_{t = 1}^T {\eta (1 + 2LT\eta )} \sum\limits_{k = 1}^K {M_k^2\Phi _k^2\sum\limits_{j \ne k} {\Psi _j^t} } .
		\end{aligned}
	\end{equation}
	Averaging over $T$ global rounds, we can have:
	\begin{equation} \label{eq56}
		\begin{aligned}
			&\mathbb{E}{^T}\left[ {{{\left\| {\nabla {\cal L}({\Theta ^t})} \right\|}^2}} \right] \\
			&\le \frac{{4\left[ {{\cal L}({\Theta ^0}) - \left[ {{\cal L}({\Theta ^T})} \right]} \right]}}{{T\eta }} \\
			&\quad+ \frac{8}{3}\sum\limits_{k = 1}^K {\frac{{\pi _k^2}}{B}(LT\eta  + 8{T^2}L_k^2{\eta ^2} + 8{T^3}LL_k^2{\eta ^3})} \\
			&\quad+ 86{T^2}(1 + 2LT\eta )\sum\limits_{k = 1}^K {M_k^2\Phi _k^2\sum\limits_{j \ne k} {\Psi _j^t} } \\
			&\le \frac{{4\left[ {{\cal L}({\Theta ^0}) - \left[ {{\cal L}({\Theta ^T})} \right]} \right]}}{{\eta T}} + 6\eta TL\sum\limits_{k = 1}^K {\frac{{\pi _k^2}}{B}} \\
			&\quad+ 92T\sum\limits_{k = 1}^K {\sum\limits_{t = 0}^{T - 1} {M_k^2\Phi _k^2} } \sum\limits_{j = 1,j \ne k}^K {\Psi _j^t}.
		\end{aligned}
	\end{equation}
	By combining the Lemma~\ref{lemma1} and Lemma~\ref{lemma2} of the work~\cite{Castiglia2022}, we have the proof of Theorem~\ref{theo:conver}.
\end{proof}

\end{document}